\newcommand\sC{\ensuremath{\mathcal{C}}}
\newcommand\BR{\ensuremath{\mathbb{R}}}
\DeclareMathOperator*{\sign}{sign}
\newcommand\pb[1]{\ensuremath{\left[ #1 \right]}} 
\newcommand\R{\ensuremath{\mathbb{R}}} 
\newcommand\refeqn[1]{(\ref{eqn:#1})}
\newcommand\refsec[1]{Section~\ref{sec:#1}}
\newcommand\reffig[1]{Figure~\ref{fig:#1}}
\newcommand\reftab[1]{Table~\ref{tab:#1}}
\newcommand\refprop[1]{Proposition~\ref{prop:#1}}
\newcommand\refcor[1]{Corollary~\ref{cor:#1}}
\newcommand{\E}{\ensuremath{\mathbb{E}}} 
\newcommand{\pr}{\ensuremath{\mathbb{P}}}
\newcommand{\pab}[1]{\ensuremath{\left|#1\right|}}
\DeclareMathOperator*{\argmin}{arg\,min}
\newcommand{\eye}{\ensuremath{\text{I}}}
\newcommand{\ms}{\ensuremath{\text{M}^\text{+s}}}
\newcommand{\mnos}{\ensuremath{\text{M}^\text{-s}}}
\newcommand{\mnosST}{\ensuremath{\text{M}^\text{-s}_{\text{RST}}}}
\newcommand{\bstari}{\ensuremath{{{\beta^i}^\star}}}
\newcommand{\bstarj}{\ensuremath{{{\beta^j}^\star}}}
\newcommand{\bstartwo}{\ensuremath{{{\beta^2}^\star}}}
\newcommand{\bstarone}{\ensuremath{{{\beta^1}^\star}}}
\newcommand{\bstar}{\ensuremath{{\beta^\star}}}
\newcommand{\tstar}{\ensuremath{{\theta^\star}}}
\newcommand{\ts}{\ensuremath{\hat\theta^\text{+s}}}
\newcommand{\ws}{\ensuremath{\hat w}}
\newcommand{\tnos}{\ensuremath{\hat\theta^\text{-s}}}
\newcommand{\tnosST}{\ensuremath{\hat\theta^\text{-s}_\text{RST}}}
\newcommand{\coreM}{M^{-s}}
\newcommand{\fullM}{M^{+s}}
\newcommand{\Err}{\text{Error}}
\newcommand{\robErr}{\text{RobustError}}
\newcommand{\maxNorm}{\ensuremath{\gamma}}\newcommand{\sVal}{\ensuremath{\mathcal{S}}}
\newcommand\independent{\protect\mathpalette{\protect\independenT}{\perp}}
\def\independenT#1#2{\mathrel{\rlap{$#1#2$}\mkern2mu{#1#2}}}
\icmltitlerunning{Removing Spurious Features can Hurt Accuracy and Affect Groups Disproportionately}
\begin{document}
	
	\twocolumn[
	\icmltitle{Removing Spurious Features can Hurt Accuracy and Affect Groups Disproportionately}



\icmlsetsymbol{equal}{*}

\begin{icmlauthorlist}
	\icmlauthor{Fereshte Khani}{to}
	\icmlauthor{Percy Liang}{to}
\end{icmlauthorlist}

\icmlaffiliation{to}{Department of Computer Science, Stanford University}

\icmlcorrespondingauthor{Fereshte Khani}{fereshte@stanford.edu}

\icmlkeywords{Machine Learning, ICML}

\vskip 0.3in
]
\printAffiliationsAndNotice{}
\begin{abstract}
The presence of spurious features interferes with the goal of obtaining robust models that perform well across many groups within the population.
A natural remedy is to remove spurious features from the model.
However, in this work we show that removal of spurious features can decrease accuracy due to the inductive biases of overparameterized models.
We completely characterize how the removal of spurious features affects accuracy across different groups (more generally, test distributions) in noiseless overparameterized linear regression.
In addition, we show that removal of spurious feature can decrease the accuracy even in balanced dataset--- each target co-occurs equally with each spurious feature;
and it can inadvertently make the model more susceptible to other spurious features. 
Finally, we show that robust self-training can remove spurious features without affecting the overall accuracy. 
Experiments on the Toxic-Comment-Detectoin and CelebA datasets show that our results hold in non-linear models.
\end{abstract}


\section{Introduction}
Machine learning models are vulnerable to fitting spurious features.
For example, models for toxic comment detection assign different toxicity scores to the same sentence with different identity terms (``I'm gay'' and ``I'm straight'') \citep{dixon2018measuring},
and models for object recognition make different predictions on the same object against different backgrounds  \citep{xiao2020noise,ribeiro2016lime}.
A common strategy to make models robust against spurious features
is to remove such features, e.g., removing identity terms from a comment \citep{garg2019counterfactual}, removing background of an image \citep{elhabian2008moving}, or learning a new representation from which it is impossible to predict the spurious feature \citep{zemel2013,louizos2015variational,beutel2017data}.
However, removing spurious features can lower accuracy \citep{zemel2013,wang2019balanced},
and moreover, this drop varies widely across groups within the population \citep{yurochkin2020sensei}.

Previous work has identified two reasons for this accuracy drop:
(i) Core (non-spurious) features are noisy or not expressive enough \citep{khani2020noise,kleinberg2019simplicity},
so spurious features are needed even by the optimal model to achieve the best accuracy.
(ii) Removing spurious features corrupts core features, especially when learning a new representation uncorrelated with the spurious features  \citep{zhao2019inherent}.

In this work, we show that even in the absence of the aforementioned two reasons,
removing spurious features can still lead to a drop in accuracy due to
the dominant effects of inductive bias in overparametrized models. 
For our theoretical analysis, we consider noiseless linear regression:
We have $d$ core features $z$
which determine the prediction target $y = \tstar^\top z$
and the spurious feature $s = \bstar^\top z$.
See \reffig{causal_graph} for the causal graph.
Importantly, (i) $s$ adds no information about $y$ beyond what already exists in $z$,
and (ii) removing $s$ does not corrupt $z$ (since we are not required to remove the correlated features to $s$). 
We consider two models:
the \emph{core model} $\coreM$, which only uses $z$ to predict $y$,
and the \emph{full model} $\fullM$, which also uses the spurious feature $s$.
In this simple setting, one might conjecture that removing the spurious feature should only help accuracy.
However, in this work we show the contrary. 
The main question we wish to answer is:
\emph{For which groups (or more generally, test distributions)
	does removing the spurious feature $s$ help or hurt accuracy?}



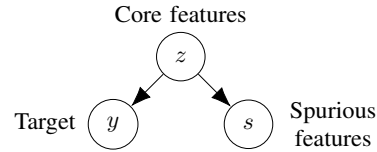
\begin{figure}
\tikzset{
-Latex,auto,node distance =1 cm and 1 cm,semithick,
state/.style ={circle, draw, minimum width = 2em, inner sep=0pt,text width=2em,text centered,},
fstate/.style ={minimum width = 0.5 cm, inner sep=0pt,text width=4em,text centered,},
point/.style = {circle, draw, inner sep=0.04cm,fill,node contents={}},
bidirected/.style={Latex-Latex,dashed},
el/.style = {inner sep=2pt, align=left, sloped},
}
	\centering
\vspace{0pt}
\scalebox{0.9}{
\begin{tikzpicture}[scale=0.5]
\node[fstate,text width=10em] (z) at  (0,-0.75) {Core features};
\node[fstate] (y) at (-4,-4)  {Target};
\node[fstate] (s) at (4.5,-4)  {Spurious features};

\node[state] (z) at  (0,-2) {$z$};
\node[state] (y) at (-2,-4)  {$y$};
\node[state] (s) at (2,-4)  {$s$};

\draw[-{Latex[length=3mm]}] (z)--(y);
\draw[-{Latex[length=3mm]}] (z)--(s);
\end{tikzpicture}}
\caption{
\label{fig:causal_graph}
We compare two models:
the \emph{core model} $\coreM$, which predicts $y$ from only the core features $z$;
and the \emph{full model} $\fullM$, which predicts $y$ from both $z$ and the spurious feature $s$.
}
\end{figure}

\begin{table*}
\newcommand{\sdd}{0.8}
\scalebox{\sdd}{
\begin{tabular}{c|lc}\toprule
\multicolumn{3}{c}{True Parameters}\\ \midrule
$\tstar=[2,2]^\top$&target&$y=[2,2] z$\\ 
$\bstar=[1,\alpha]^\top$&spurious&$s=[1,\alpha] z$\\ \bottomrule
\multicolumn{3}{c}{(a)}
\end{tabular}}\hfill%
\scalebox{\sdd}{
\begin{tabular}{cc|c}\toprule
\multicolumn{3}{c}{Training Example} \\ \midrule
$z$&\small $s$ & \small $y$\\ $[1,0]^\top$&1&2\\
\bottomrule 
\multicolumn{3}{c}{(b)}
\end{tabular}}\hfill%
\scalebox{\sdd}{
\begin{tabular}{ll}\toprule
\multicolumn{2}{c}{Model estimated parameters} \\ \midrule
Core model&$\hat \theta = [2,0]^\top$\\
Full model
&$\hat \theta = [1,0]^\top, w=1$
\\ 
\bottomrule
\multicolumn{2}{c}{(c)}
\end{tabular}}\hfill%
\scalebox{\sdd}{
\begin{tabular}{ll}\toprule
\multicolumn{2}{c}{Model prediction} \\ \midrule
Core model & $\hat y = [2,0] z$ \\ 
Full model & $\hat y = [1,0] z + s = 
[2, \alpha] z$\\ 
\bottomrule
\multicolumn{2}{c}{(d)}
\end{tabular}}
\caption{
\label{tab:intro_example}
A simple linear regression example that shows when removing a spurious feature increases the error. 
(a) There are 2 core features, $z = [z_1, z_2]$, one spurious feature $s = \bstar^\top z$ and the target is $y=\tstar^\top z$ . 
(b) There is only one training example with no information about $z_2$. 
(c) Core model's estimate for the parameters is $[2,0]$; 
however, the full model interpolates data with a smaller norm by putting weight $w=1$ for the spurious feature $s$. 
(d) For the prediction, we can replace $s$ by $\bstar^\top z=[1,\alpha]^\top z$,  which results in the full model implicitly assigning weight of $\alpha$ for the second feature; while the core model assigns wight of $0$ to the second feature.
As a result, if $|\alpha-2| < |0-2|$  then removing $s$ increases the error. 
}
\end{table*}

In the overparametrized regime, since number of training examples is less than the number of features, there are some directions of data variation that are not observed in the training data.
For instance, \reftab{intro_example} shows a simple example with only one training data and two core features ($z_1$ and $z_2$). 
In this example,  we do not observe any information about the second feature ($z_2$).
The core model assigns weight $0$ to the unseen directions (weight $0$ for the second feature in \reftab{intro_example}(c)).
On the other hand, in the presence of a spurious feature, the full model can fit the training data perfectly  with a smaller norm by assigning a non-zero weight for the spurious feature (weight $1$ for the feature $s$ in \reftab{intro_example}(c)).
This non-zero weight for the spurious feature leads to a different inductive bias for the full model.
In particular, the full model does not assign weight $0$ to the unseen directions.
In \reftab{intro_example}(d), the full model \emph {implicitly} assigns  weight $\alpha$ to the second feature (unseen direction at training), while the core model assigns weight $0$. 
As a result, if $|\alpha-2|$ is small, then the full model which uses $s$ has lower error, and if $|\alpha -2|$ is large then core model which does not use $s$ has lower error.

Intuitively, by using the spurious feature, the full model incorporates $\bstar$ into its estimate. 
The true target parameter ($\tstar$) and the true spurious feature parameters ($\bstar$) agree on some of the unseen directions and do not agree on the others. 
Thus, depending on which unseen directions are weighted heavily in the test time, removing $s$ can lower or higher the error.
We formalize the conditions under which the removal of the spurious feature ($s$) increases the error in \refprop{whenRemoveSHurts}.
In particular, \refprop{whenRemoveSHurts} states that in addition to the true parameters (\tstar, \bstar), the distribution of core-features in train and test data are also critical on the impact of removing $s$ on error. 
Consequently, conditions only on the true parameters (e.g., the target and spurious feature are determined by disjoint features, $\theta^\star_i\beta^\star_i=0$, for all $i$), or only on the spurious feature and target (e.g., balanced dataset, $y \independent s$ in the train and test data) are not sufficient, and removing the spurious feature may aggravate the error even under these favorable conditions.

We then study multiple spurious features and show that removing one spurious feature inadvertently makes the model more susceptible to the rest of the spurious features, in line with the recent empirical results \citep{yin2019fourier}.
Finally, we show how to leverage unlabeled data and the recently introduced Robust Self-Training (RST) \citep{raghunathan2020understanding,carmon2019unlabeled,najafi2019robustness} to remove the spurious features but retain the same performance as the full model.
The new model, Core+RST model, is robust to changes in $s$, and it can perform when $s$ is not available.


Empirically, we analyze the effect of removing spurious features by training a convolutional neural network on three datasets:
the CelebA dataset for predicting if a celebrity is wearing lipstick where we use wearing earrings as the spurious feature;
the Comment-Toxicity-Detection dataset for predicting the toxicity of a comment where we use identity terms as the spurious features; and 
finally, a synthetically-generated dataset where we concatenate each MNIST image with another image and use the label of the new image to be the spurious feature.
Our empirical results are four folds:
1) Removal of the spurious feature lowers the average accuracy and disproportionately affects different groups (+30\% increase for some groups and -7\% decrease for others). 
2) The full model is not robust against the spurious feature, and changing the spurious feature at the test time lowers its accuracy substantially. 
3) The Core+RST achieves similar average accuracy as the full model while being robust against the spurious feature.
4) In the CelebA dataset, we show that removing the spurious hair color feature makes the model less robust against wearing earrings. 


\newcommand{\xc}{\ensuremath{x_{\text{core}}}}
\newcommand{\xs}{\ensuremath{x_\text{sp}}}
\section{Setup}
Let $z \in \BR^{d}$ denote the core features which determine the prediction target, $y = f(z)$.
Let $s = g(z)$ denote a spurious feature.
We study the overparameterized regime, where we observe $n < d$ triples ($z_i, s_i, y_i)$ as training data. 
For an arbitrary loss function $\ell$, the standard error for a model $M:\BR^d \times \BR \rightarrow \BR$ is:
\begin{align}
\Err (M) = \E [ \ell (y, M(z,s))],
\end{align}
where the expectation is over test points $(z, s, y)$.
We are also interested in robustness of a model against spurious feature.
Let \sVal{} denote the set of different values for the spurious feature.
We define the robust error as follows:
\begin{align}
\label{eqn:robErr}
\robErr(M) = \E [ \max_{s’ \in \sVal} \ell (y, M(z,s’))]
\end{align}
\robErr{} measures the worst case error for each data point $z$ with respect to change in $s$. 
Ideally we want our model prediction to be robust with respect to change in the spurious feature (i.e., $\Err (M) = \robErr (M)$).
\robErr{} is a common definition in robust machine learning against input perturbation~\cite{carmon2019unlabeled,raghunathan2020understanding,ilyas2019adversarial}.
Also \robErr{} is close to counterfactual notions of fairness, such as Counterfactual Token Fairness \cite{garg2019counterfactual},  see \refsec{related_work} for more discussion. 

Let $\sC$ denote a function that measures the complexity of a model (e.g, the $L2$-norm of its parameters), and $\lambda$ be a parameter to tune the trade-off between complexity of a model and its empirical risk.
We are interested in the performance of the following two models:
\begin{itemize}
	\item {\bf Full model (\ms):} which uses the core features $z$ and the spurious $s$ to predict $y$.
	\begin{align}
		\label{eqn:ms_setup}
	\ms = \argmin_{M} \sum_i \ell \left (M(z_i,s_i),y_i\right) + \lambda \sC(M)
	\end{align}
	\item {\bf Core model (\mnos):} which only uses the core features $z$ to predict $y$.
		\begin{align}
			\label{eqn:mnos_setup}
	\mnos = \argmin_M \sum_i \ell (M(z_i,0),y_i) + \lambda \sC (M)
	\end{align}
\end{itemize}
The core model does not use the spurious feature, therefore its \robErr{} is equal to its \Err.  However, there might be a large gap between \Err{} and \robErr{} for the full model.
 

\begin{table*}[t]
\newcommand{\sdd}{0.8}
\centering	

\scalebox{\sdd}{
\begin{tabular}{lc}\toprule
\multicolumn{2}{c}{True parameters}\\ \midrule
$\tstar=$&$[2,2,\ \ 2]^\top$\\ 
$\bstar=$&$[1,2,-2]^\top$ \\ \bottomrule
\end{tabular}}\hfill
\scalebox{\sdd}{
\begin{tabular}{ccc}\toprule
	\multicolumn{3}{c}{Training example}\\ \midrule
$z$&$s$ &  $y$\\ 
$[1,0,0]^\top$&1&2\\
\bottomrule
\end{tabular}}\hfill
\scalebox{\sdd}{
\begin{tabular}{ll}\toprule
	\multicolumn{2}{c}{Model estimated parameters}\\ \midrule
Core model &$\tnos = [2,0,\ \ 0]^\top$\\ 
Full Model
&$\ts = [1,0,\ \ 0]^\top, w=1$ 
\\ \bottomrule
\end{tabular}}\hfill
\scalebox{\sdd}{
\begin{tabular}{ll}\toprule
	\multicolumn{2}{c}{Model prediction} \\ \midrule
	Core model & $\hat y = [2,0,0] z$ \\ 
	Full model & $\hat y = [1,0,0] z + s = 
	[2, 2,-2] z$\\ 
	\bottomrule
\end{tabular}}
\caption{
\label{tab:example2}
There is only one training example with no information about $z_2$ and $z_3$. 
The core model assigns weight of $0$ to these two features.
The full model, uses the spurious feature ($s$),  which results in a good estimation for real weight of $z_2$ but not $z_3$.
}
\end{table*}

\section{When does the Full Model Outperform the Core Model?}
\label{sec:ones}
For the theory section of the paper, we consider the noiseless linear regression setup.
We assume there are true parameters $\tstar, \bstar \in \BR^d$ such that the
prediction target, $y = \tstar^\top z$, and the spurious feature $s = \bstar^\top z$ (in \refsec{multis} we study multiple spurious features and their interactions).
Motivated by recent work in deep learning, which shows gradient descent converges to the minimum-norm solution that fit training data perfectly \cite{gunasekar2017implicit}, we consider the minimum-norm solution (i.e., the complexity function $\sC$ in \refeqn{ms_setup} and \refeqn{mnos_setup} returns the $L_2$-norm of the parameters with regularization strength tending to $0$).
We consider squared-error for the loss function.

Let $Z \in \BR^{n\times d}$ be the matrix of observed core features, $Y \in \BR^n$ be the targets, and $S \in \BR^n$ be the spurious features. 
We first analyze the minimum-norm estimate for the core model. 
Let $\tnos$ denote the estimated parameters of the core model which can be obtained through solving the following optimization problem. 
\begin{align}
	\label{eqn:nos}
	\tnos = \argmin_{\theta} \| \theta\|_2^2 \nonumber\\
	s.t. \quad Z\theta =Y.
\end{align}

Let $\Pi = Z^\top (ZZ^\top)^{-1}Z$ be the projection matrix to columns of $Z$ then $\tnos = \Pi \tstar$. 

The full model uses the spurious feature in addition to the core features.
Let $\ts$ and $\ws$ denote the estimated parameters for the full model which can be obtained through solving the following optimization problem.
\begin{align}
	\label{eqn:withs}
	\ts, \ws = \argmin_{\theta, w} \| \theta\|_2^2 + w^2 \nonumber\\
	s.t. \quad Z\theta + Sw =Y,
\end{align}
where $w$ denote the weight associate with the spurious feature.
Note that in \refeqn{withs} we can always set $\ws = 0$ to obtain $\tnos$, so the full model only achieves smaller norm by optimizing over $w$.
In particular, instead of having norm of $\|\Pi \tstar\|_2^2$, the full model can use $s$ with weight $\ws$ and have norm $\|\Pi\tstar - \ws\Pi\bstar\|_2^2 + \ws^2$ instead.
As a result, $\ws$ is larger if $\tstar$ and $\bstar$ are correlated in column space of the training data ($\Pi$).
The optimum value for $\ws$ which minimizes the norm is:
\begin{align}
	\label{eqn:ws}
	\ws = \frac{\tstar^\top \Pi \bstar}{1 + \bstar^\top \Pi \bstar} 
\end{align}

See Appendix~\ref{sec:proofs} for details.
The estimated parameters for the full and core models are shown in \reftab{eq_summary} (first and second rows).



To understand the difference between inductive biases of the core model and full model and how they affect different groups, we extend the example in the introduction to contain two unseen directions, see \reftab{example2}.
In this example, there are $d=3$ core features and only one training example with no information about the second and the third features.
Let $\tstar =
[2,2,2]^\top$ which results in $y=2$.
Let $\bstar = [1,2,-2]^\top$ which results in $s=1$. 
Without the spurious feature, the estimated parameter is $\tnos = [2,0,0]^\top$ however, by using $s$ the full model fits the training data perfectly with a smaller norm by setting $\ws=1$. 
By substituting $s$ with $\bstar^\top z$, the full model implicitly assigns weights of $2$ and $-2$ to the second and third features, respectively. 
Therefore, in this example, removing $s$ decreases the error for the groups with high variance on the second feature, and removing $s$ increases the error for the groups with high variance on the third feature.


%
The following proposition, which is our main result, provides general conditions that characterize precisely when the core model (removing the spurious feature) increases the error over the full model. 

\begin{restatable}{proposition}{biasVarianceMsMnos}
  Let $\Sigma = \E[z z^\top]$ denote the covariance matrix of the test data (or covariance matrix for any group), and let $\Pi$ denote the column space of training data. 
	$\Err (\ms) < \Err (\mnos)$ iff:
	\begin{align}
		\label{eqn:sign}
		\sign \left({{\bstar}^\top \Pi \tstar}\right)=\sign \left ({\bstar}^\top (I-\Pi)\Sigma (I-\Pi) \tstar\right), 
	\end{align}
and
	\begin{align}
	\label{eqn:magn}
	\pab {\frac{{\bstar}^\top \Pi \tstar}{{1+\bstar}^\top \Pi \bstar}} < \pab {\frac{2{\bstar}^\top (I-\Pi)\Sigma (I-\Pi) \tstar}{{\bstar}^\top (I-\Pi) \Sigma (I-\Pi) \bstar}}. 
\end{align}
	\label{prop:whenRemoveSHurts}
\end{restatable}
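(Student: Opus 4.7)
The plan is to reduce both errors to simple quadratic forms in $\tstar$ and $\bstar$, take their difference, and read off sign and magnitude conditions on that difference.

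First I would rewrite the full model at test time as a single linear predictor on $z$. Since $s=\bstar^{\!\top} z$ on every test point, the full model predicts $\ts^{\!\top} z+\ws\,s=(\ts+\ws\bstar)^{\!\top} z$. Using the interpolation constraint $Z\ts+S\ws=Y=Z\tstar$ together with $\ts\in \mathrm{row}(Z)$ (from minimum norm) gives $\ts=\Pi\tstar-\ws\Pi\bstar$, so the effective full-model coefficient is
\begin{equation*}
\tilde\theta \;=\; \Pi\tstar \;+\; \ws\,(I-\Pi)\bstar,
\end{equation*}
while the core model's coefficient is $\tnos=\Pi\tstar$. Both models therefore agree on the column space of $Z$ and differ only in the orthogonal complement, by $\ws(I-\Pi)\bstar$.

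Second, because $y=\tstar^{\!\top} z$ and the loss is squared, for any coefficient $\theta$ we have $\E[(y-\theta^{\!\top} z)^2]=(\tstar-\theta)^{\!\top}\Sigma(\tstar-\theta)$. Writing $A:=(I-\Pi)\Sigma(I-\Pi)$ and using $\Pi(\tstar-\tnos)=0$ and $\Pi(\tstar-\tilde\theta)=0$, this simplifies to
\begin{align*}
\Err(\mnos) &= \tstar^{\!\top} A\,\tstar,\\
\Err(\ms) &= (\tstar-\ws\bstar)^{\!\top} A\,(\tstar-\ws\bstar).
\end{align*}
Expanding the second expression, the difference reduces to a scalar quadratic in $\ws$:
\begin{equation*}
\Err(\mnos)-\Err(\ms) \;=\; 2\ws\,\bstar^{\!\top} A\,\tstar \;-\; \ws^{2}\,\bstar^{\!\top} A\,\bstar.
\end{equation*}

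Third, I would analyze when this quadratic is strictly positive. Since $A$ is PSD, $\bstar^{\!\top} A\bstar\ge 0$; the nondegenerate case is $\bstar^{\!\top} A\bstar>0$ (the degenerate case $\bstar^{\!\top} A\bstar=0$ forces $(I-\Pi)\Sigma(I-\Pi)\bstar=0$ and can be handled separately). Factoring out $\ws$ and using that the denominator of $\ws$ is $1+\bstar^{\!\top}\Pi\bstar>0$, we have $\sign(\ws)=\sign(\bstar^{\!\top}\Pi\tstar)$. Positivity of the quadratic then requires (i) $\sign(\ws)=\sign(\bstar^{\!\top} A\tstar)$, which via the previous identity becomes $\sign(\bstar^{\!\top}\Pi\tstar)=\sign(\bstar^{\!\top} A\tstar)$ — exactly \eqref{eqn:sign} — and (ii) once signs agree, taking absolute values yields $|\ws|\,\bstar^{\!\top} A\bstar<2\,|\bstar^{\!\top} A\tstar|$, which is precisely the magnitude condition \eqref{eqn:magn} after substituting the formula for $\ws$ from \eqref{eqn:ws}. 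Conversely, \eqref{eqn:sign} and \eqref{eqn:magn} together imply the quadratic is positive, completing the iff.

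\paragraph{Main obstacle.}
The only step that requires care is the derivation of the effective full-model predictor $\tilde\theta=\Pi\tstar+\ws(I-\Pi)\bstar$: one must verify that the minimum-norm solver really decomposes in this way (i.e.\ that $\ts$ lies in $\mathrm{row}(Z)$ and that the closed form \eqref{eqn:ws} for $\ws$ is the unconstrained optimizer of $\|\Pi\tstar-w\Pi\bstar\|_2^2+w^2$), which is exactly the content deferred to Appendix~\ref{sec:proofs}. Beyond that, the argument is a direct bias computation and the treatment of a scalar quadratic, and the degenerate case $\bstar^{\!\top} A\bstar=0$ needs a short separate sentence to note that then $\Err(\ms)<\Err(\mnos)$ reduces to $\ws\,\bstar^{\!\top} A\tstar>0$, which the two stated conditions still capture (vacuously in \eqref{eqn:magn}).
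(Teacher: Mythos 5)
Your proposal is correct and follows essentially the same route as the paper: derive $\ws$ and the effective coefficients $\Pi\tstar$ and $\Pi\tstar+\ws(I-\Pi)\bstar$, express both errors as quadratic forms in $(I-\Pi)\Sigma(I-\Pi)$, and analyze the sign of the resulting scalar quadratic $2\ws\,\bstar^{\top}A\tstar-\ws^{2}\bstar^{\top}A\bstar$ to obtain the sign and magnitude conditions. Your explicit treatment of the degenerate case $\bstar^{\top}A\bstar=0$ is a small point of added care that the paper's proof leaves implicit.
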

Intuitively, this proposition states that removing the spurious feature increases the error if the correlation between $\bstar$ and $\tstar$ in column space of training data (seen directions) is similar to the correlation of $\bstar$ and $\tstar$ in the null-space of training data (unseen directions) scaled by the covariance matrix. 
Applying \refeqn{magn} to the example in \reftab{example2}, we see that removing $s$ reduces the error at the test time with covariance matrix $\Sigma$ if $2\Sigma_{23}+3\Sigma_{22} \le \Sigma_{33}$. 
This is in line with our intuition since the full model recovers $\theta^\star_2$ exactly, however, its estimation for $\theta^\star_3$ is worse than the core model's estimate. 
\begin{table*}[ht]
	\newcommand{\clr}{\cellcolor{cyan!25}}

\begin{minipage}{0.35\textwidth}
\centering	

\begin{tabular}{lc}\toprule
\multicolumn{2}{c}{True parameters}\\ \midrule
$\tstar=$&$[1,0,\ \ \ 1,\ \ \ 0]^\top$\\ 
$\bstar=$&$[1,1,-1,-1]^\top$\\ \bottomrule
\end{tabular}

\vspace{0.5cm}

\begin{tabular}{ccc}\toprule
	\multicolumn{3}{c}{Training examples}\\ \midrule
$z$ &\clr $s$ & \clr $y$\\
$[1,0,0,0]$&\clr 1&\clr 1\\
$[0,1,0,0]$&\clr 1&\clr 0\\
\bottomrule
\end{tabular}
\end{minipage}\hspace{0.1\textwidth}%
\begin{minipage}{0.45\textwidth}
	\centering
\begin{tabular}{llccl}\toprule
\multicolumn{5}{c}{Estimated parameters}\\ \midrule
$\tnos=$ &$[1$,&$0$,&$0$,&$0]^\top$\\ 
$\ts=$
&$[2/3$,&$-1/3$,&$0$,&$0]^\top, w=1/3$\\ 
\bottomrule
\end{tabular}
\\\vspace{0.5cm}
\begin{tabular}{ccc|cc}\toprule
	\multicolumn{3}{c|}{Test Examples} & 	\multicolumn{2}{c}{Models predictions}\\ \midrule
	$z$ &\clr$s$ & \clr $y$&Core model (\mnos)&Full model (\ms)\\ 
	$[0,2,1,0]$&\clr 1&\clr 1&0&-1/3\\
	$[0,2,0,1]$&\clr 1&\clr 0&0&-1/3\\
	\bottomrule
\end{tabular}
\end{minipage}
\caption{
\label{tab:same_s_y}
An example demonstrating that $s$ and $y$ can be exactly the same in train and test time (blue filled cells), but the full model (which uses $s$) has a worse performance in comparison to the core model (which does not use $s$). 
}
\end{table*}

\begin{table*}[ht]
\let\footnoterule\relax

\begin{tabular}{llrr}\toprule
	\multicolumn{4}{c}{True parameters}\\ \midrule
$\tstar=$&$[2,$&\cellcolor{red!25}$2,$&$ 2]^\top$\\ 
$\bstarone=$&$[1,$&$-3,$&$ 0]^\top$\\
$\bstartwo=$&$[1,$&$ 0,$&$-3]^\top$\\  \bottomrule
\end{tabular}\hfill
\begin{tabular}{cccc}\toprule
	\multicolumn{4}{c}{Training example}\\ \midrule
$z$& $s_1$ &$s_2$ & \small $y$\\ 
$[1,0,0]$&1&1&2\\
\bottomrule
\end{tabular}\hfill
\begin{tabular}{llcrcc}\toprule
\multicolumn{6}{c}{Estimated parameters}\\ \midrule
&\multicolumn{3}{c}{$\theta$}& $w_1$ & $w_2$ \\ \midrule
with $s_1$ and $s_2$ &$[2/3,$ &$0,$ & $0]^\top$&\cellcolor{cyan!25}$2/3$&$2/3$\\ 
(equivalently) & $[2,$&\cellcolor{red!25}$-2,$&$-2]^\top$&$0$&$0$\\ \midrule
with only $s_1$
&$[1,$ & $0,$ & $0]^\top$&\cellcolor{cyan!25}$1$&$0$\\ 
(equivalently) & $[2,$ &\cellcolor{red!25}$-3,$ & $0]^\top$&$0$&$0$\\ \bottomrule
\end{tabular}
\caption{
\label{tab:multis}
An example with two spurious features ($s_1$ and $s_2$).
There are two models: one model which only uses $s_1$ and another model which uses $s_1$ and $s_2$.
Removing $s_2$ increases the weight for $s_1$ (weight $1$ in comparison to $2/3$, blue filled cells). 
As a result the model that only uses $s_1$ is more susceptible to change in $s_1$ (i.e., increase the \robErr{} with respect to $s_1$) and it performs worse on groups with high variance on $z_2$ (weight of $-3$ instead of $-2$ while the true weight is $2$, red filled cells).
}
\end{table*}

\subsection{Disjoint Parameters and Balanced Dataset  are not Enough}
\refprop{whenRemoveSHurts} characterizes when removal of the spurious feature decreases the overall error.
Now we investigate if removing spurious feature always decreases the error for some special settings. 
For example, if the features that determine the spurious feature ($s$) and the target ($y$) respectively are disjoint (i.e., for all $i$ we have: $\beta_i^\star \theta_i^\star = 0$), or when spurious features and the target are independent (i.e., $s \independent y$ in the train and test empirical distribution). 

The following corollary states that there is no condition on the true parameters that guarantees error reduction by removing the spurious feature. 
\begin{restatable}{corollary}{needZone}(Disjoint parameters are not enough)
	\label{cor:needZone}
	For $d \ge 4$, consider any non-zero $\tstar,\bstar \in \BR^d$, such that there is no scalar $c$ where $\bstar = c\tstar$.
	For any $n<d-1$, we can construct $Z\in \BR^{n \times d}$ as training and $,Z',Z'' \in \BR^{n \times d}$ as test data such that if we train $\ms$ and $\mnos$ using $Z$ as training data, then $\Err (\ms) < \Err (\mnos)$ on $Z'$, and $\Err (\ms) > \Err (\mnos)$ on $Z''$. 
	On the contrary, if $\bstar = c \tstar$, then training both models on any $Z$, results in $\Err (\ms) \le \Err (\mnos)$ on any $Z'$.
\end{restatable}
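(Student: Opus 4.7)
The plan is to leverage \refprop{whenRemoveSHurts} after extracting the scalars that actually govern the gap. Starting from $\ts = \Pi\tstar - \ws\Pi\bstar$ (which the excerpt has already identified as the minimum-norm $\theta$-component of the full model), I would first rewrite the full model's prediction at a test point $z$ with $s=\bstar^\top z$ as $(\Pi\tstar+\ws(I-\Pi)\bstar)^\top z$ and compute
\begin{equation*}
\Err(\ms)-\Err(\mnos)=\ws^{2}\,\tilde b^\top\Sigma\tilde b-2\ws\,\tilde b^\top\Sigma\tilde t,
\end{equation*}
with $\tilde t=(I-\Pi)\tstar$, $\tilde b=(I-\Pi)\bstar$, $\ws=\bstar^\top\Pi\tstar/(1+\bstar^\top\Pi\bstar)$. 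So only three scalars control the sign of the gap, and the strategy is: (i) pick the training data $Z$ so that $\ws\ne 0$ and $\{\tilde t,\tilde b\}$ is linearly independent; then (ii) pick rank-one test covariances $\Sigma'=v'v'^\top$ and $\Sigma''=v''v''^\top$ (realized by $Z',Z''\in\BR^{n\times d}$ whose rows all equal $v'^\top$ and $v''^\top$) whose values of $\tilde b^\top\Sigma\tilde t$ have opposite signs, so the gap strictly flips.

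For step (i), let $W=\mathrm{span}(\tstar,\bstar)$; it has dimension $2$ because $\bstar\ne c\tstar$. Since $n<d-1$ gives $\dim W^\perp\ge 2$, I would fix any $e\in W^\perp\setminus\{0\}$, choose scalars $(\alpha,\beta)$ outside two codimension-one lines so that $u=\alpha\tstar+\beta\bstar+e$ satisfies $u^\top\tstar\ne 0$ and $u^\top\bstar\ne 0$, and complete to a basis of an $n$-dimensional subspace $V$ by $n-1$ orthonormal vectors $f_1,\dots,f_{n-1}\in W^\perp\cap e^\perp$ (possible because $\dim(W^\perp\cap e^\perp)=d-3\ge n-1$). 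Letting $Z$ have these as rows, the orthogonality of each $f_i$ to $W$ gives $\Pi\tstar,\Pi\bstar$ as nonzero multiples of $u$, hence $\bstar^\top\Pi\tstar\ne 0$; and any relation $\tilde b=\lambda\tilde t$ would place $\bstar-\lambda\tstar$ in $V\cap W$, which equals $\{0\}$ by construction (the $W^\perp$-component $\lambda e+\sum\mu_i f_i$ of any element of $V$ vanishes only when $\lambda=0$ and all $\mu_i=0$).

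For step (ii), I parametrize $v=a\tilde t+b\tilde b$ inside $\mathrm{span}(\tilde t,\tilde b)$ and observe that the ratio
\begin{equation*}
\frac{\tilde b^\top vv^\top\tilde t}{\tilde b^\top vv^\top\tilde b}=\frac{a\|\tilde t\|^{2}+b\,\tilde t^\top\tilde b}{a\,\tilde t^\top\tilde b+b\|\tilde b\|^{2}}
\end{equation*}
is a M\"obius function of $b/a$ whose determinant $\|\tilde t\|^2\|\tilde b\|^2-(\tilde t^\top\tilde b)^2$ is strictly positive (Cauchy--Schwarz for linearly independent $\tilde t,\tilde b$), so its range is all of $\BR$ with one value removed. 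I then choose $v'$ to realize the ratio $+\sign(\ws)(|\ws|+1)$ and $v''$ to realize $-\sign(\ws)(|\ws|+1)$ (perturbing slightly if needed to avoid the omitted value). For the resulting $\Sigma'$, both \refeqn{sign} and \refeqn{magn} hold, so $\Err(\ms)<\Err(\mnos)$ on $Z'$ via \refprop{whenRemoveSHurts}; for $\Sigma''$ the sign of $\tilde b^\top\Sigma''\tilde t$ flips, so in the displayed gap both $\ws^2\tilde b^\top\Sigma''\tilde b$ and $-2\ws\tilde b^\top\Sigma''\tilde t$ are nonnegative with the second strictly positive, yielding $\Err(\ms)>\Err(\mnos)$ on $Z''$.

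The converse is a direct substitution: when $\bstar=c\tstar$ we have $\tilde b=c\tilde t$ and $\ws=c\|\Pi\tstar\|^{2}/(1+c^{2}\|\Pi\tstar\|^{2})$, so $c\ws\in[0,1)$ and the displayed gap becomes $c\ws(c\ws-2)\,\tilde t^\top\Sigma\tilde t\le 0$ for every $Z$ and $Z'$. The main obstacle is arranging the three non-degeneracies in step~(i) simultaneously while preserving $\dim V=n$; this is exactly where the hypothesis $n<d-1$ enters, since it is equivalent to $\dim(W^\perp\cap e^\perp)=d-3\ge n-1$, the condition that lets the $f_i$ be chosen.
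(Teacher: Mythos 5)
Your proposal is correct and takes essentially the same route as the paper's proof: both reduce to \refprop{whenRemoveSHurts}, construct the training data so that $\bstar^\top\Pi\tstar\neq 0$ while $(I-\Pi)\tstar$ and $(I-\Pi)\bstar$ remain linearly independent, and then choose rank-one test covariances along directions in the null space of the training data that give the cross term $\bstar^\top(I-\Pi)\Sigma(I-\Pi)\tstar$ either sign. The differences are only in execution --- your training matrix has full rank $n$ (the paper uses $n$ copies of a single row $a_1$) and your M\"obius-map argument pins down the magnitude condition \refeqn{magn} more systematically than the paper's ``choose $x$ small'' --- but the underlying construction and the converse computation for $\bstar=c\tstar$ are the same.
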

For any $\bstar$ and $\tstar$, we can choose the training data $Z$, such that the parameters have positive correlation in the column space of training data.
We can then select the test data in a way that the correlation between \tstar{} and \bstar{} in the null space has the same or opposite sign. 
See Appendix~\ref{sec:proofs} for the full proof.
Note that even in the special case of disjoint parameters ($\theta_i^\star\beta_i^\star =0$, for all $i$), removing $s$ can increase the error depending on the core features distribution.

Can we guarantee error reduction by conditioning on the vector of observed spurious features and the targets in train and test data?
In fact, one of the proposed ways to reduce the sensitivity of the model against the spurious features is having a balanced dataset.
(i.e., collecting the dataset such that $y$ and $s$ be independent in train and test data, $\pr [y \mid s] = \pr [y]$).
For example, in comment toxicity detection, \citet{dixon2018measuring} suggest adding new examples in training data to equalize the number of toxic comments and non-toxic comments for each identity term. 
Although they show mitigation with this method, \citet{wang2019balanced} demonstrate that a balanced dataset is not enough, and the model can still be sensitive to the spurious features.

We now show that no measure depending on the spurious features and the target in train and test can guarantee that removing a spurious feature decreases the error.

\begin{restatable}{corollary}{needZtwo}(Balanced dataset is not enough)
	\label{cor:needZtwo}
	Consider any $d \ge 4$, $n < d$ and $S,Y\in \R^n$, where there is no scalar $c$ such that $Y=cS$. 
	We can construct $Z,Z',Z'' \in \R^{n\times d}$, such that  if we use $(Z,S,Y)$ as training set to train $\ms$ and $\mnos$, then the $\Err(\ms) < \Err (\mnos)$ on $(Z',S,Y)$ and $\Err(\mnos) < \Err (\ms )$ on $(Z'',S,Y)$.
	On the contrary, if $Y=cS$ then training both models on any $(Z,S,Y)$, results in $\Err (\ms ) \le \Err (\mnos)$ on any $(Z',S,Y)$. 
\end{restatable}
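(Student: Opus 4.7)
The strategy is to reduce to \refcor{needZone} by exploiting the freedom in choosing underlying parameters $\tstar, \bstar \in \R^d$ consistent with the given $Y, S$. Because $Y$ and $S$ are linearly independent by hypothesis, I can pick any non-proportional, non-orthogonal pair $\tstar, \bstar$; non-orthogonality guarantees $\ws \neq 0$ via \refeqn{ws}. The training constraints $Z\tstar = Y$ and $Z\bstar = S$ then impose only two linear conditions per row, leaving $d - 2 \geq 2$ free dimensions for each row of $Z$, and analogously for $Z'$ and $Z''$.

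A direct expansion of the squared-error expressions using $Z'\tstar = Y$ and $Z'\bstar = S$ yields
\begin{align*}
\Err(\mnos) - \Err(\ms) \;=\; \ws\,\bigl(2\, a^\top \Sigma b \;-\; \ws\, a^\top \Sigma a\bigr),
\end{align*}
where $a := (I-\Pi)\bstar$, $b := (I-\Pi)\tstar$, and $\Sigma := \tfrac{1}{n}(Z')^\top Z'$. I first choose $Z$ so that (i) $a$ and $b$ are linearly independent and nonzero, and (ii) $\bstar^\top \Pi \tstar \neq 0$ (so $\ws \neq 0$). For $n < d-1$ this is immediate since $\dim \mathrm{Null}(Z) = d - n \geq 2$; in the boundary case $n = d-1$, I give $Z$ reduced rank $d - 2$ by inducing a linear dependence among its rows, which is consistent with $Y, S$ because $\mathrm{span}(Y, S)$ is only 2-dimensional in $\R^n$. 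Given such a $Z$, I parametrize each row of $Z'$ (and $Z''$) as $z^{(i)}_\star + u_i$ with $u_i \perp \mathrm{span}(\tstar, \bstar)$; then $a^\top (Z')^\top Z' b$ becomes a polynomial in the $u_i$ whose sign I can tune because $a$ and $b$ have nonzero components outside $\mathrm{span}(\tstar, \bstar)$. Making this form sufficiently positive satisfies both conditions of \refprop{whenRemoveSHurts} and gives $\Err(\ms) < \Err(\mnos)$ on $(Z', S, Y)$, while making it sufficiently negative flips the sign of the bracketed term above and gives $\Err(\ms) > \Err(\mnos)$ on $(Z'', S, Y)$.

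For the converse, assume $Y = cS$ and set $\tstar = c\bstar$ for any $\bstar$ with $Z\bstar = S$. Then $\tnos = c\,\Pi\bstar$ and, using \refeqn{ws}, $\ws = \tfrac{c\,\bstar^\top \Pi \bstar}{1 + \bstar^\top \Pi \bstar}$, so the full-model effective residual simplifies to
\begin{align*}
\tstar - \ts - \ws\,\bstar \;=\; \frac{c}{1 + \bstar^\top \Pi \bstar}\,(I - \Pi)\bstar,
\end{align*}
which is a contraction of the core-model residual $c\,(I - \Pi)\bstar$ since $\bstar^\top \Pi \bstar \geq 0$. Hence $\Err(\ms) \leq \Err(\mnos)$ on every test distribution, in particular on any $Z'$ satisfying $Z'\tstar = Y$ and $Z'\bstar = S$.

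The main obstacle is the $n = d - 1$ case: generically $\mathrm{Null}(Z)$ is only 1-dimensional, which forces $a$ and $b$ to be collinear and makes the sign of $a^\top \Sigma b$ independent of $\Sigma$. The low-rank construction above circumvents this, crucially using that $Y, S$ span only 2 dimensions (so a nontrivial row-dependence exists) and that $d \geq 4$ provides enough room in $\R^d$ both for the row space of $Z$ to avoid $\mathrm{span}(\tstar, \bstar)$ and for $\mathrm{span}(\tstar, \bstar)^\perp$ to have dimension $\geq 2$, giving the freedom needed to tune the sign of $a^\top(Z')^\top Z' b$.
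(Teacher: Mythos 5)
Your proposal follows essentially the same route as the paper: fix a pair $\tstar,\bstar$ consistent with the given $S,Y$, build a (low-rank) training matrix $Z$ realizing them, and then perturb the test rows by vectors annihilating both $\tstar$ and $\bstar$ so as to flip the sign of the unseen-direction correlation term; the paper does exactly this with the explicit choice $\bar Z=[S,\,Y-S]$ padded with zeros and rank-one test perturbations $a,a'$ satisfying $a\tstar=a\bstar=0$, and your converse ($\tstar=c\bstar$, residual contracted by $1/(1+\bstar^\top\Pi\bstar)$) matches the paper's $\Pi\tstar=c\Pi\bstar$ computation. One step is under-justified as written: you claim you can tune the sign (and magnitude) of $a^\top(Z')^\top Z' b=\sum_i(u_i^\top a)(u_i^\top b)$ ``because $a$ and $b$ have nonzero components outside $\mathrm{span}(\tstar,\bstar)$,'' but nonzero components do not suffice --- if the orthogonal projections of $a=(I-\Pi)\bstar$ and $b=(I-\Pi)\tstar$ onto $\mathrm{span}(\tstar,\bstar)^\perp$ were parallel, then $(u^\top a)(u^\top b)$ would have a fixed sign for every admissible $u$, and the construction would collapse (this is the same degeneracy you correctly flag for $n=d-1$). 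What you actually need, and should state, is that these two projections are linearly independent; this does follow in your setup because $Y$ and $S$ are not parallel (hence $\Pi\tstar,\Pi\bstar$ are linearly independent, so any relation $\alpha a+\beta b\in\mathrm{span}(\tstar,\bstar)$ forces $\alpha a+\beta b=0$, contradicting your condition (i)). With that observation in place, choosing $u$ with $u^\top a$ small and $u^\top b=\pm\sign(\ws)$ handles both the sign and the magnitude requirements, and your argument goes through. (Also, your opening remark that non-orthogonality of $\tstar,\bstar$ forces $\ws\neq 0$ is incorrect --- $\ws$ depends on $\Pi$ --- but you repair this yourself by later imposing $\bstar^\top\Pi\tstar\neq 0$ directly on $Z$.)
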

See Appendix~\ref{sec:proofs} for the proof. At a high level, for proving this corollary, we first rewrite the formulation of $\ws$ in terms of $S$ and $Y$. 
\begin{align}
	\ws = \frac{\tstar^\top \Pi \bstar}{1 + \bstar^\top \Pi \bstar} = \frac{S^\top (ZZ^\top)^{-1}Y}{1 + S^\top (ZZ^\top)^{-1}S}
\end{align}
We then construct $Z$ such that the dot product of $S$ and $Y$ projected on $(ZZ^\top)^{-1}$ is non-zero. 
We then construct $Z',Z'',\tstar,\bstar$ such that $Y=Z\tstar=Z'\tstar=Z''\tstar$, and $S=Z\bstar=Z'\bstar=Z''\bstar$. 
Furthermore, we select $Z'$ to have high variance on the unseen directions that \tstar{} and \bstar{} have negative correlation, while choosing $Z''$ to have high variance on the unseen directions with positive correlation between \tstar{} and \bstar{}. See Appendix~\ref{sec:proofs} for the proof.

There are two main messages in \refcor{needZtwo}, (i) The core model can have lower error than the full model even when there is no shift in the distribution of the spurious feature and target (i.e., $S$ and $Y$ are exactly the same in train and test).
See \reftab{same_s_y} for a simple example.
(ii)
We can have independent targets and spurious features at train and the test, and still, the full model (which uses $s$) has lower error than the core model.

\refcor{needZone} and \ref{cor:needZtwo} together state that we cannot compute the sensitivity of a model to the spurious feature, or the effect of removing spurious feature by only observing the relation between $s$ and $y$ or knowing the relationship between \tstar{} and \bstar{}.
Therefore, naively collecting a balanced dataset is not enough, and we need to consider other features as well.
\subsection {\robErr{} analysis} 
 \refprop{whenRemoveSHurts} compares the error of the full model and core model, and show each model can outperform the other on some conditions.
In this section, we show that \robErr{} of the full model is always larger than the \robErr{} of the core model (recall that for the core model $\robErr(\mnos) = \Err (\mnos)$).
Intuitively for any $(z,s)$ data point we can perturb $s$ such that the full model makes the same prediction as the core model, therefore, the full model's \robErr{} is always lower than the core model error.

\begin{restatable}{proposition}{robustErrorCompared}
If $\|z\| \le \maxNorm$ and consequently the set of different values for the spurious feature is $\sVal = [-\maxNorm\|\bstar\|_*, \maxNorm\|\bstar\|_*]$ then:
\begin{align}
	\robErr(\mnos)  \le \robErr(\ms)
\end{align}
\end{restatable}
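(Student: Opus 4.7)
My plan is to reduce the claim to a pointwise inequality: for every test triple $(z,s,y)$, the adversary for the full model can always match the core model's prediction, and hence achieve at least the core model's (loss) value. Taking expectations will then give the desired bound on $\robErr$.

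First I would note that $\mnos$ does not depend on its second argument, so $\robErr(\mnos) = \Err(\mnos) = \E[\ell(y,\tnos^\top z)]$. Next I would record the explicit form of the full model's estimator. From the minimum-norm problem in \refeqn{withs}, for any fixed $w$ the minimum-norm $\theta$ satisfying $Z\theta = Y - Sw = Z(\tstar - w\bstar)$ is $\Pi(\tstar - w\bstar)$. In particular $\ts = \Pi\tstar - \ws \Pi\bstar = \tnos - \ws\,\Pi\bstar$. Therefore the full model's prediction can be rewritten as
\begin{align}
\ms(z,s') \;=\; \ts^\top z + \ws s' \;=\; \tnos^\top z \;+\; \ws\bigl(s' - (\Pi\bstar)^\top z\bigr).
\end{align}

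The key step is to exhibit a single adversarial choice $s'$ that collapses the second term. I would pick $s' = (\Pi\bstar)^\top z$; then $\ms(z,s') = \tnos^\top z = \mnos(z,0)$, so the full model's loss at this $s'$ equals the core model's loss. To finish I must verify that this $s'$ lies in $\sVal = [-\maxNorm\|\bstar\|_*,\, \maxNorm\|\bstar\|_*]$. By Hölder's inequality (or Cauchy--Schwarz in the $L_2$ case used throughout),
\begin{align}
\bigl|(\Pi\bstar)^\top z\bigr| \;\le\; \|\Pi\bstar\|_* \,\|z\| \;\le\; \|\bstar\|_*\,\maxNorm,
\end{align}
where the second inequality uses that orthogonal projection is non-expansive in the (self-dual) $L_2$ norm. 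Hence $(\Pi\bstar)^\top z \in \sVal$ for every $z$ with $\|z\|\le \maxNorm$.

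Combining these pieces, for each test point $(z,s,y)$,
\begin{align}
\max_{s'\in\sVal}\ell\bigl(y,\ms(z,s')\bigr) \;\ge\; \ell\bigl(y,\ms(z,(\Pi\bstar)^\top z)\bigr) \;=\; \ell\bigl(y,\tnos^\top z\bigr) \;=\; \ell\bigl(y,\mnos(z,s)\bigr).
\end{align}
Taking expectations over the test distribution yields $\robErr(\ms) \ge \robErr(\mnos)$. The main (and essentially only) obstacle is the norm-containment check for $(\Pi\bstar)^\top z$; everything else is algebraic manipulation based on the already-derived closed form for $\ts$ and $\ws$. Note that the argument never used squared error, so it in fact works for any loss $\ell$ for which the required expectations and maxima exist.
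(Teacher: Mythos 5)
Your proof is correct and follows essentially the same route as the paper: exhibit a single adversarial value $s' \in \sVal$ at which the full model's prediction collapses to the core model's prediction, verify membership in $\sVal$ via the dual norm and non-expansiveness of projection, and take expectations. In fact your choice $s' = (\Pi\bstar)^\top z$ is the one consistent with $\ts = \Pi\tstar - \ws\Pi\bstar$; the paper's proof instead writes $s' = ((I-\Pi)\bstar)^\top z$ together with a correspondingly mis-stated expansion of $\ms(z,s')$, which appears to be a typo that your derivation quietly fixes.
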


Note that without any bounds on the spurious feature the \robErr{} can be infinity.
By bounding the norm of $z$, we can bound the perturbation set of the spurious feature. Here $\|.\|_*$ indicates the dual norm, see Appendix~\ref{sec:proofs} for the proof.

\subsection{Multiple spurious features}
\label{sec:multis}
We now extend our framework to $k$ spurious features, $s_1,\dots,s_k$, where $s_i = {\bstari}^\top z$.
We characterize the effect of removing one spurious feature on the other spurious features.
In particular, we show that removing one spurious feature can make the model more susceptible to other spurious features.

Extending the notation from \refsec{ones}, let $Z \in \BR^{n\times d}$ be the core features, $Y \in \BR^n$ be the target, and $S \in \BR^{n \times k}$ denote the spurious features. 
Similar to previous section, we obtain the minimum-norm estimate by solving the following optimization problem:
\begin{align}
	\label{eqn:multis_opt}
	\ts, \ws = \argmin_{\theta, w} \| \theta\|_2^2 + \|w\|^2_2 \nonumber\\
	s.t. \quad Z\theta + Sw =Y,
\end{align}
where $\ws \in \BR^k$ denote the optimal weights on spurious features. 
\begin{restatable}{proposition}{multiSFeatures} The weight of the $i^\text{th}$ spurious feature of the minimum norm estimator is
	\begin{align}
		\label{eqn:multis}
		\ws_i = \frac{\tstar^\top\Pi \bstari - \sum\limits_{j\neq i} \ws_j {\bstari}^\top \Pi \bstarj}{1+{\bstari}^\top \Pi \bstari}.
	\end{align}
\end{restatable}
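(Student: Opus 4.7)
The plan is to mimic the derivation that was sketched for the single-spurious-feature case (recall equation \refeqn{ws}), but now keep track of cross terms between the $k$ spurious features. The central idea is to profile out $\theta$ analytically and reduce the problem in \refeqn{multis_opt} to an unconstrained quadratic program in $w$ alone; the first-order optimality condition for $w_i$ then yields the claimed recursion.

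First I would fix $w \in \BR^k$ and observe that the inner problem
\[
\min_{\theta}\ \|\theta\|_2^2 \quad \text{s.t.}\quad Z\theta = Y - Sw
\]
has the familiar min-norm solution $\theta(w) = Z^\top(ZZ^\top)^{-1}(Y - Sw)$, with squared norm
\[
\|\theta(w)\|_2^2 \;=\; (Y - Sw)^\top (ZZ^\top)^{-1} (Y - Sw).
\]
Next I would exploit the noiseless-linear-regression structure: since $Y = Z\tstar$ and the $j$-th column of $S$ is $s_j = Z\bstarj$, we can write $Y - Sw = Z\bigl(\tstar - \sum_j w_j\bstarj\bigr)$. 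Plugging in and using $\Pi = Z^\top(ZZ^\top)^{-1}Z$, the objective of \refeqn{multis_opt} reduces to a pure function of $w$:
\[
F(w) \;=\; \Bigl(\tstar - \textstyle\sum_j w_j\bstarj\Bigr)^{\!\top} \Pi \Bigl(\tstar - \textstyle\sum_j w_j\bstarj\Bigr) + \|w\|_2^2.
\]

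Finally I would differentiate $F$ with respect to $w_i$ and set the gradient to zero. The derivative is
\[
\tfrac{\partial F}{\partial w_i} \;=\; -2\,{\bstari}^\top \Pi \Bigl(\tstar - \textstyle\sum_j w_j \bstarj\Bigr) + 2 w_i,
\]
so setting this to zero and separating the $j=i$ term gives
\[
w_i\bigl(1 + {\bstari}^\top \Pi \bstari\bigr) \;=\; {\bstari}^\top \Pi \tstar - \sum_{j\neq i} w_j\,{\bstari}^\top \Pi \bstarj,
\]
which rearranges to the claimed formula for $\ws_i$. Since $F$ is a convex quadratic in $w$ (its Hessian $2I + 2B^\top \Pi B$, where $B$ stacks the $\bstarj$'s, is PSD plus $2I$), this critical point is the unique minimizer, so evaluating the system at $w=\ws$ yields the identity stated in the proposition.

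The only potentially delicate step is the profiling in the first paragraph: one has to be sure that $ZZ^\top$ is invertible (which is implicit in the overparameterized regime used throughout the paper, with $Z$ of rank $n$) so that $\Pi$ is well-defined. After that, the rest is a routine first-order condition, identical in spirit to the single-$s$ derivation leading to \refeqn{ws} — the new content is merely the appearance of the cross-correlation terms ${\bstari}^\top \Pi \bstarj$ which are exactly the inter-spurious-feature interactions that the proposition is meant to expose.
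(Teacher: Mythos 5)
Your proposal is correct and follows essentially the same route as the paper: both reduce the problem to the quadratic $\bigl(\tstar - \sum_j w_j\bstarj\bigr)^{\top}\Pi\bigl(\tstar - \sum_j w_j\bstarj\bigr) + \|w\|_2^2$ in $w$ (the paper writes this as $\|\Pi\tstar - \Pi\bstar \ws\|^2 + \sum_j w_j^2$) and then read off the first-order condition in $w_i$. Your version is slightly more careful in that it makes the profiling of $\theta$ and the convexity/uniqueness of the minimizer explicit, but the substance is identical.
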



Consider the special case of $k=2$ with $s_1$ and $s_2$ as spurious features, where $\bstarone$ and $\bstartwo$ are positively correlated in the column space of training data.
\reftab{multis} shows a simple example of this setup.
As shown in \refeqn{multis}, removing $s_2$ increases the weight for $s_1$, which makes the model more sensitive against changes in $s_1$.
For instance, in \reftab{multis}, after removing $s_2$ the weight of $s_1$ changed from $2/3$ to $1$ (green cells).

In addition, recall that using $s_1$ causes high error for groups with high variance on the unseen directions where $\bstarone$ differs from $\tstar$.
Removing $s_2$ exacerbates the problem even more in these groups.
In our example, using $s_1$ and $s_2$ the model's estimate for $\theta_2^\star$ is $-2$ which is different from the true value of $\theta_2^\star = 2$; therefore, groups with high variance on the second feature incur high error. 
Removing $s_2$ changes the estimate of $\theta_2^\star$ from $-2$ to $-3$, which exacerbates the error on the groups with high variance on $z_2$.
This is in line with the recent empirical results suggesting that making a model robust against one type of noise might make it more vulnerable against other types of noise \citep{yin2019fourier}.


\begin{table*}
	\centering
	\scalebox{0.9}{
	\begin{tabular}{ccll} \toprule
		model & weight for $s$  & weights for $z$ & Error\\ \midrule
		\mnos & $0$ & $\Pi \tstar$ & $\E [(y-z^\top\Pi\tstar)^2]$\\ 
		\ms& $w$  &$\Pi \tstar - w\Pi \bstar$  &$\E[ (y-z^\top\Pi\tstar)^2] + \E [w^2 (s-z^\top\Pi\bstar)^2] \quad\quad-  \E[ 2w (y -z^\top\Pi\tstar) (s-z^\top\Pi\bstar)]$ \\ 
		\mnosST & $0$ & $\Pi \tstar + w(    \eye-\Pi)\bstar$ & $\E[ (y-z^\top\Pi\tstar)^2] + \E [w^2 (z^\top\bstar - z^\top\Pi\bstar)^2] - \E[ 2w (y -z^\top\Pi\tstar)(z^\top\bstar - z^\top\Pi\bstar)]$ \\ \bottomrule
	\end{tabular}}
	\caption{\label{tab:eq_summary} Estimated parameters and error for different models. $w=\frac{{\bstar}^\top \Pi \tstar}{{1+\bstar}^\top \Pi \bstar}$}
\end{table*}
\section{Self-Training}
\label{sec:rst}



In the previous examples shown in \reftab{intro_example} and \reftab{example2}, we showed that the full model which uses the spurious feature (\ms) has an equivalent form which has weight $0$ on the spurious feature. 
This equivalent form is useful when feature $s$ is not available, or when we test our model in a new domain with different $(z,s)$ distribution.
We now explain how to recover the equivalent $s$-oblivious form with finite labeled data and access to $m>d$ independent unlabeled examples.

In order to recover the $s$-oblivious equivalent form of the full model, we use Robust Self Training (RST).
RST introduced by \citet{carmon2019unlabeled,raghunathan2020understanding,najafi2019robustness,uesato2019are} leverage unlabeled data to make the model robust to adversarial perturbations (small perturbation on the model's input) while guaranteeing no decrease in average accuracy. 
In particular, RST first pseudo label all unlabeled data, with a non-robust model (with low error). 
It then trains a robust model on labeled data and pseudo labeled data to recover a model with both low error and low \robErr.

We now explain how to use RST in our setup. 
Assume in addition to $n$ labeled examples ($z_i,s_i,y_i$), we have access to $m$ unlabeled examples $(z^u_i, s^u_i)$.
We are interested in a model that 1) it is robust against $s$, and 2) it has the same prediction as full model $\ms$ on unlabeled data. 
We recover $\mnosST$ as follows:
\begin{align}
	\label{eqn:self_training_general}
	\mnosST = \argmin_M \Big (&\sum \ell (M (z_i,0), y_i)\nonumber\\
	&+\eta \sum \ell (M(z^u_i,0), \ms (z^u_i,s^u_i)) \nonumber\\
	&+\lambda \mathcal{C}(M)\Big),
\end{align}
where $\lambda,\eta$ are parameters to tune the trade-off between the true labeled data, pseudo-labeled data generated by the full model, and complexity of the model.

We now show that in our linear regression setup, $\mnosST$ is a $s$-robust equivalent form of $\ms$. 
Let $Z^u \in \BR^{m \times d}$ and $S^u \in \BR^{m}$ denote the unlabeled data. Let $\tnosST$ denote the estimated parameters of RST model obtained by solving the following optimization problem.
\begin{align}
	\label{eqn:rst_comp_prop4}
	\tnosST = &\argmin_\theta \|\theta\|_2^2 \nonumber\\
	\text{s.t}\quad &Z\theta = Y \nonumber \\
	&Z^u \theta = Z^u \ts + S^uw 
\end{align}
The following proposition shows the optimum parameters for \refeqn{rst_comp_prop4} and prove that for any data point $\mnosST$ has the same prediction as $\ms$.
\begin{restatable}{proposition}{rst}
	\label{prop:rst}
The optimum parameters for \refeqn{rst_comp_prop4} are:
\begin{align}
	\ws &=\frac{{\bstar}^\top \Pi \tstar}{{1+\bstar}^\top \Pi \bstar}\\
	\tnosST &= \Pi \tstar + w(I-\Pi) \bstar
\end{align}
and for any data point $(z,s)$, we have:
\begin{align}
	\mnosST (z,0) = \ms (z,s)
\end{align}
\end{restatable}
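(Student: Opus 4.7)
The plan is to exploit linearity: since the spurious feature satisfies $s = \bstar^\top z$ for every data point, the unlabeled constraint in \refeqn{rst_comp_prop4} can be rewritten using $S^u = Z^u \bstar$, which collapses the two constraints in the RST program into a single affine system whose solution can be read off in closed form.

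First I would substitute $S^u = Z^u\bstar$ into the unlabeled constraint, turning it into $Z^u\theta = Z^u(\ts + \ws\bstar)$. Because we are given $m>d$ independent unlabeled examples, $Z^u \in \BR^{m \times d}$ has full column rank, so this equation has the unique solution $\theta = \ts + \ws\bstar$. This makes the $\|\theta\|_2^2$ minimization in \refeqn{rst_comp_prop4} trivial: the feasible set is a singleton once we verify that the labeled constraint $Z\theta = Y$ is also met by this $\theta$.

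Next I would plug in the known expression $\ts = \Pi\tstar - \ws\Pi\bstar$ (row 2 of \reftab{eq_summary}) to obtain
\begin{align}
\tnosST = \Pi\tstar - \ws \Pi\bstar + \ws\bstar = \Pi\tstar + \ws(I-\Pi)\bstar,
\end{align}
which is the claimed formula, and the formula for $\ws$ is inherited directly from \refeqn{ws}. Consistency with the labeled constraint then follows from $Z\Pi = Z$ (a basic property of the projector $\Pi = Z^\top(ZZ^\top)^{-1}Z$), which gives $Z(I-\Pi)\bstar = 0$ and hence $Z\tnosST = Z\Pi\tstar = Z\tstar = Y$.

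Finally, for the predictive equivalence $\mnosST(z,0) = \ms(z,s)$, I would compute both sides directly. On one hand, $\mnosST(z,0) = z^\top\tnosST = z^\top\Pi\tstar + \ws\, z^\top(I-\Pi)\bstar$. On the other hand, using $s = \bstar^\top z$,
\begin{align}
\ms(z,s) = z^\top \ts + \ws s = z^\top\Pi\tstar - \ws z^\top \Pi\bstar + \ws \bstar^\top z = z^\top\Pi\tstar + \ws z^\top(I-\Pi)\bstar,
\end{align}
so the two agree. The only mildly delicate step is justifying that the single affine constraint from the unlabeled data suffices to pin $\theta$ down (the $m>d$ assumption), and checking that this $\theta$ is automatically consistent with the labeled constraint; all the remaining work is linear-algebra bookkeeping using $Z\Pi = Z$ and $s = \bstar^\top z$.
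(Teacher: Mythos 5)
Your proposal is correct and follows essentially the same route as the paper's proof: substitute $S^u = Z^u\bstar$ and $\ts = \Pi\tstar - \ws\Pi\bstar$ into the unlabeled constraint, use $m>d$ to pin down $\theta = \Pi\tstar + \ws(I-\Pi)\bstar$, and verify predictive equivalence by direct computation with $s=\bstar^\top z$. Your explicit check that this $\theta$ also satisfies the labeled constraint (via $Z(I-\Pi)=0$) is a small point the paper leaves implicit, but it is not a different argument.
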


See Appendix~\ref{sec:proofs} for details. 
Note that \mnosST{} simply learn $\bstar$ from unlabeled data and replace $ws$ with $w\bstar^\top z$. 
See \reftab{eq_summary} for the estimated parameters of the three introduced models.


\begin{table*}[t]
	\centering
	\scalebox{0.9}{
	\begin{tabular}{p{2.5cm}p{2.5cm}p{2.5cm}p{2.7cm}|p{3.7cm}cc}\toprule
name & core features ($z$) & target ($y$) & spurious feature ($s$) & Example:  z & y & s \\ \midrule
Double-MNIST & two MNIST images & label of the left image & label of the right image & \centering \raisebox{-.5\height}{ \includegraphics[height=0.07\textwidth]{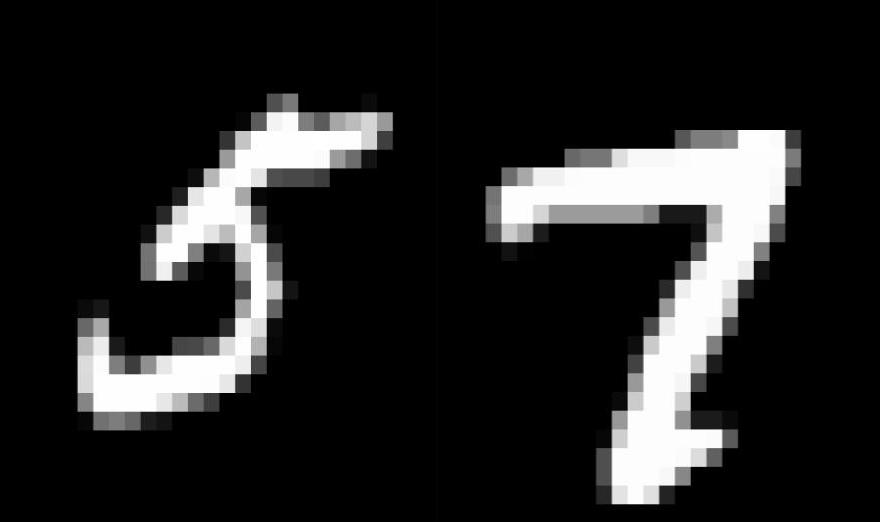}} & 5 & 7 \\ \midrule
CelebA & celebrities photo & wearing lipstick & wearing earrings &\centering\raisebox{-.5\height}{\includegraphics[height=0.07\textwidth]{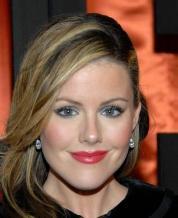}} & True & True \\ \midrule
Toxic-Comment-Detection & comment (w/o identity terms) & toxic or not & identity terms
&cuz i shouldn't be blocked just for being UNK& non-toxic & black \\
 \bottomrule
	\end{tabular}}
  \caption{\label{tab:datasets} A summary of the three datasets that we used in this work. 
Double-MNIST is a synthetically generated dataset where each data point is a concatenation of two images from the MNIST dataset.
  }
\end{table*}

\begin{figure}
	\centering
\includegraphics[width=0.5\textwidth]{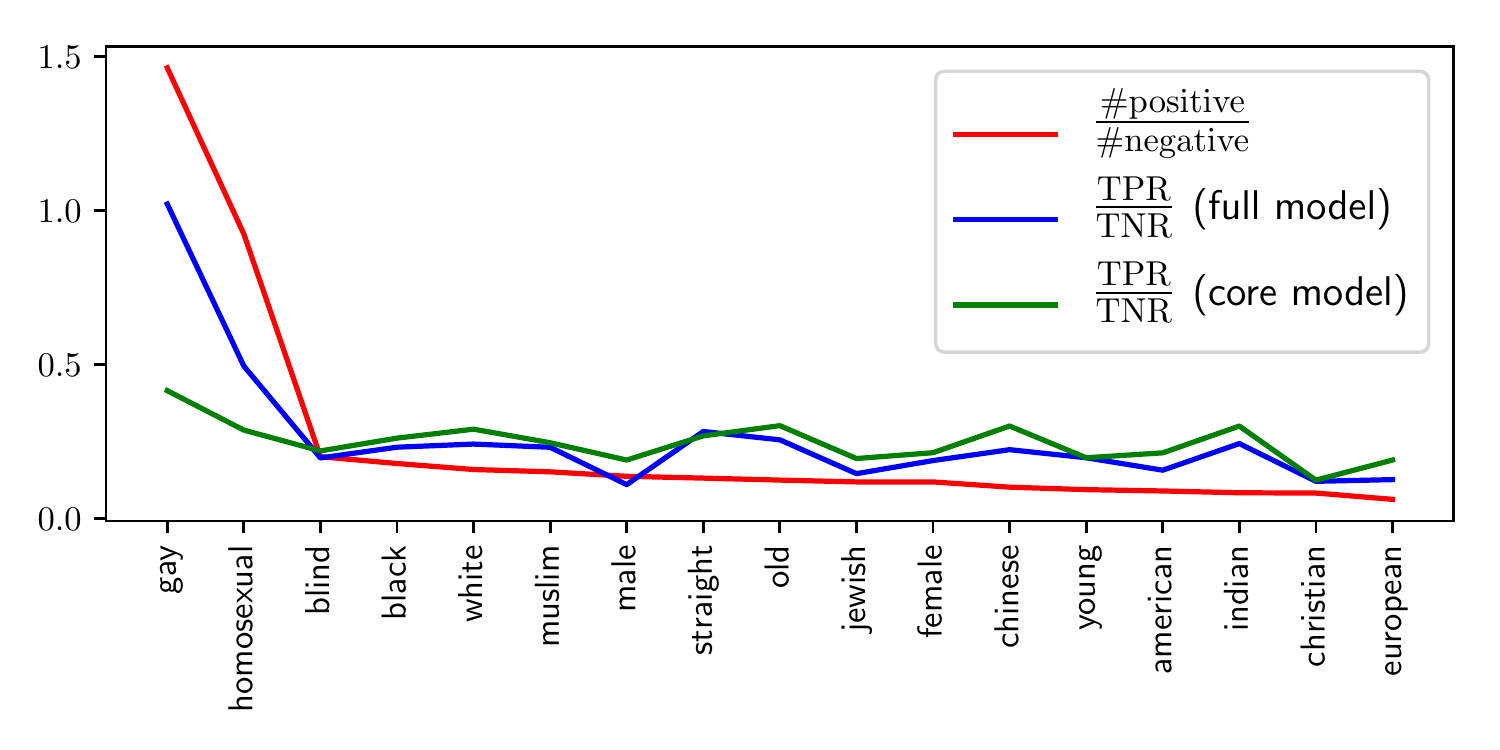}
\caption{\label{fig:17} The identity terms used as spurious features. 
The Red line indicates the ratio of positive comments containing the identity terms over the negative comments containing the identity term. 
The difference between the ratio of TPR/TNR of the core and full model can be small (straight, young) or large (male, Chinese) independent of \#positive/\#negative ratio for the identity term.}
\end{figure}
\section{Experiments}
\label{sec:experiments}
\begin{table*}[t]
\centering
\scalebox{0.77}{	\begin{tabular}{l||ccc||ccccc||c} \cmidrule[\heavyrulewidth]{2-10}
	&\multicolumn{3}{c||}{ Double-MNIST}&\multicolumn{5}{c||}{CelebA}&\multicolumn{1}{c}{Toxic-Comments}\\ \cmidrule{2-10}
	&\makecell{all}&\makecell{same\\ labels}&\makecell{different\\ labels}&\makecell {all}&\makecell{no lipstick\\or earrings}&\makecell {only\\ earrings}&\makecell {\makecell{only\\ lipstick}}&\makecell{both lipstick\\ and earrings}&\makecell{all}\\ \midrule
	group size (in percentage) &100&90&10&100&52.0&3.4&28.8&15.8 &100\\ \midrule
	full model accuracy &92.7$\pm$0.06&96.6$\pm$0.05&53.4$\pm$0.4&	83.5$\pm$0.1&85.4$\pm$0.4&33.3$\pm$0.9&79.6$\pm$0.6&95.0$\pm$0.3&	90.1 $\pm$ 0.1\\
	core model accuracy &92.1$\pm$0.06&94.8$\pm$0.06&64.1$\pm$0.3&	82.5$\pm$0.1&82.0$\pm$0.5&59.1$\pm$1.0&85.2$\pm$0.6&84.1$\pm$0.7&	89.0 $\pm$ 0.1\\
	full model robust accuracy &47.3$\pm$0.92&49.6$\pm$0.96&24.4$\pm$0.7&	66.7$\pm$0.4&58.7$\pm$0.9&33.3$\pm$0.9&79.6$\pm$0.6&76.4$\pm$0.8&	74.9 $\pm$ 0.5 \\ \midrule 
	core+RST accuracy &92.7$\pm$0.06&96.2$\pm$0.05&56.7$\pm$0.5&	83.2$\pm$0.1&83.4$\pm$0.5&58.7$\pm$1.1&84.8$\pm$0.7&85.2$\pm$0.7&	89.6 $\pm$ 0.1 \\  
	\bottomrule
\end{tabular}}
\caption{\label{tab:results_summary}
Accuracy declines as we remove spurious feature (the core model vs. the full model), however, note that this drop varies widely among different group.
The core+RST achieves similar average accuracy as the full model while having high robust accuracy and lower gap among different groups accuracy.}
\end{table*}
\begin{figure}
	\centering
	\includegraphics[width=0.3\textwidth]{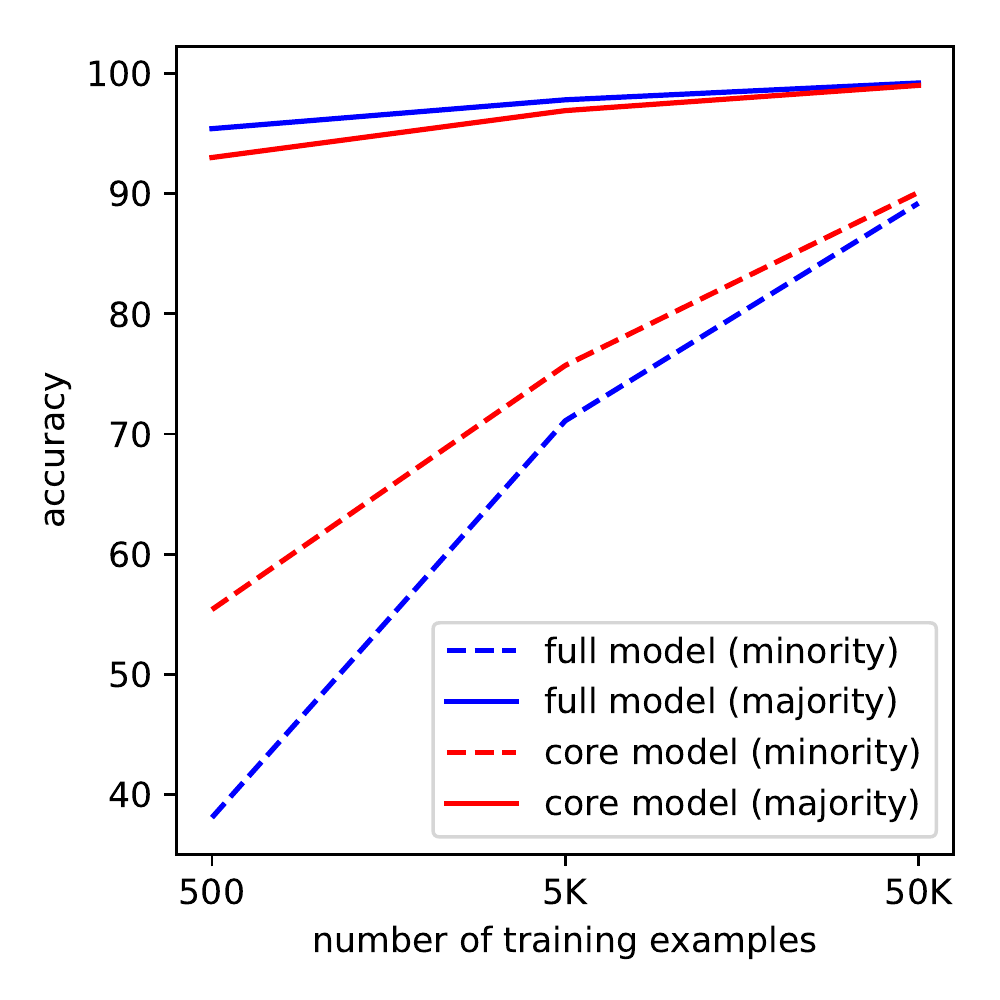}
	\caption{\label{fig:effect_of_size} 
	In Double-MNIST dataset, as we increase the number of training data the gap between the performance of core model and full model shrinks.	
	The majority groups contains data poitns where the labels of two concatenated images are the same (90\% of data), and minority group contains data points where the labels of the concatenated images are different (10\% of data).}
\end{figure}
\begin{table*}[t]
\newcommand*\rot{\rotatebox{0}}
\centering
\scalebox{0.8}{	
\begin{tabular}{ll|ccccc} \cmidrule[\heavyrulewidth]{3-7}
&& \makecell {all}&\makecell{no lipstick or earrings}&\makecell {only  earrings}&\makecell {\makecell{only  lipstick}}&\makecell{both lipstick  and earrings}\\ \midrule
\multirow{2}{120pt}{With hair color and necklace}& accuracy &83.9 $\pm$ 0.1 &85.0 $\pm$ 0.5 &37.0 $\pm$ 1.1 &81.8 $\pm$ 0.7 &94.2 $\pm$ 0.3 
\\
&robust accuracy &68.5 $\pm$ 0.4 &60.9 $\pm$ 1.0 &37.0 $\pm$ 1.1 &81.8 $\pm$ 0.7 &76.8 $\pm$ 0.8 
\\ \midrule
\multirow{2}{120pt}{With only necklace} &accuracy & 83.5$\pm$0.1&85.4$\pm$0.4&33.3$\pm$0.9&79.6$\pm$0.6&95.0$\pm$0.3
\\
&robust accuracy  &66.7$\pm$0.4&58.7$\pm$0.9&33.3$\pm$0.9&79.6$\pm$0.6&76.4$\pm$0.8
\\
\bottomrule
\end{tabular}}
\caption{\label{tab:results_multis}Two models trained on the CelebA dataset. Top model uses hair color and lipstick (a binary feature indicating if a person is wearing lipstick or not), bottom model only uses hair color. 1) the robust accuracy against hair color drop more rapidly for the bottom model. 2) The group that has lowest accuracy for the bottom model (only earrings) have a better accuracy when hair color is also used as an extra spurious feature. }
\end{table*}

We now investigate the effects of removing spurious features in non-linear models trained on real-world datasets.
Although our theory assumptions do not hold anymore, we find similar results as \refsec{ones}. 
In particular, we show that removal of a spurious feature lowers the average accuracy, has disproportionate effects on different groups, and makes the model less robust to other spurious features.  
We then show core+RST model can achieve higher accuracy than the core model while not relying on the spurious feature.
\subsection{Datasets and Setup}
\paragraph{Double-MNIST}
The MNIST dataset \citep{lecun1998gradient} consists of $60K$ images of handwritten digits between $0$ to $9$. 
We synthetically construct a new dataset from the MNIST dataset, which we call Double-MNIST.
We concatenate each image in MNIST with another random image from the same class with probability of $0.9$ and a random image from other classes with probability $0.1$.
The original image's label is the target ($y$) and the  concatenated image's label is the spurious feature ($s$). 
Note that the features that determine the target (the first image) are completely disjoint from the feature that determine the spurious feature (the second image).
We train a two-layer neural network with $128$ hidden units on this dataset.
Using $50K$ for the training data, the model achieves $98.3\%$ accuracy.
However, for our experiments (where we need unlabeled data), we used $1K$ labeled examples, $50K$ unlabeled examples, and $10K$ for test data.
\paragraph{CelebA}
The CelebA dataset \citep{liu2015deep} contains photos of celebrities along with $40$ different attributes. 
We choose wearing lipstick which indicates if a celebrity is wearing lipstick as the target and wearing earrings as the spurious feature. 
We train a two-layer neural network with $128$ hidden units on this dataset.
For our purposes in this work, we use $1K$ labeled examples, $50K$ unlabeled examples, and $10K$ for test data..
\paragraph{Toxic-Comment-Detection.}
The Toxic comment dataset is a public Kaggle dataset containing $160K$ Wikipedia comments.\footnote{\url{https://www.kaggle.com/c/jigsaw-toxic-comment-classification-challenge}}
Each comment is labeled by human raters as toxic or non-toxic, where toxicity is defined by \citet{dixon2018measuring} as ``rude, disrespectful, or unreasonable comment that is likely to make you leave a discussion.''.
We cleaned the data by replacing abbreviations (e.g., changing ``we've'' to ``we have'').
We used term frequency-inverse document frequency (tf-idf) to extract features and used a logistic regression model to train on the extracted features. 
Splitting data to the 80-20 train-test, we achieve a similar AUC ($95.8$) as reported in \citet{garg2019counterfactual}.
\citet{dixon2018measuring} provide $50$ identity terms\footnote{\url{https://github.com/conversationai/unintended-ml-bias-analysis/blob/master/unintended_ml_bias/bias_madlibs_data/adjectives_people.txt}} that a model should be robust against them. 
We choose $17$ of these identity terms that exist in positive and negative class at least $30$ times (see \reffig{17} for the list). 
We use these terms as the spurious feature and replace them with a special token for the core model. 
For the full model we did not remove these identity terms.
We choose the subset of the dataset that contains these $17$ adjectives, consisting of $11K$ examples.
We did $20-80$ train-test split, use $500$ examples as labeled data, and the rest of examples as unlabeled data.

See \reftab{datasets} for a summary of datasets.  
We run each experiment $50$ times and report the average accuracy and the standard deviation.

\subsection{Results} 
\paragraph{\textbf{Core model vs. Full model.}}  Removing the spurious feature decreases the overall accuracy in all three datasets, see the first two rows of \reftab{results_summary}.
This is in particular surprising for the Double-MNIST dataset where the target and the spurious feature get determined by disjoint set of features. 
Note that the change in accuracy varies widely among different groups (in the CelebA dataset removing the spurious wearing-earrings feature increases the accuracy for celebrities who only wear either lipstick or earrings, and in the Double-MNIST dataset removing the second image label increases the accuracy for data points where the concatenated images have the different labels).

For the Toxic-Comment-Detection dataset, for each identity term $u$, we construct two groups: one group consist of toxic comments that have $u$ in the comment text, and another group consists of non-toxic comments that have $u$ in the comment text, resulting in total of $34$ groups.
The maximum gap among accuracy of the 17 groups containing toxic comments (max-TPR-gap) drop from $70.1$ to $28.4$, and the maximum gap among accuracy of groups that are non-toxic (max-TNR-gap) drops from $20.5$ to $1.8$.
Toxic-gay and non-toxic-gay groups incurred the maximum change in accuracy ($-40$ and $+19$ respectively) due to removal of identity terms.
\reffig{17} shows the change in true positive rate, true negative rate ratio for different identity terms.

\paragraph{\textbf{Robust accuracy.}} Recall that robust accuracy is the worst-case accuracy for each data point with respect to change in the spurious feature. 
First, note that the robust accuracy of the full model (third row) is much lower than its accuracy (first row), which shows the full model's reliance on the spurious feature.
Furthermore, this reliance helps some groups (e.g., celebrities who wear earrings and lipstick) while it does not have any effect on other groups (e.g., celebrities who wear earrings but not lipstick).
Recall that the robust accuracy of the core model is exactly equal to its accuracy since it does not use the spurious feature.
Finally, in line with our theory in \refprop{rst}, the robust accuracy of the full model is always lower than the robust accuracy of the core model.

\paragraph{\textbf{Robust Self-Training.}} The forth row of the \reftab{results_summary} shows the accuracy core+RST model as explained in \refsec{rst}. 
The core+RST has a better acerage accuracy than the core model. 
Recall that Core+RST does not use the spurious feature; therefore, its robust accuracy (unlike the full model) is exactly equal to its accuracy.
In \refsec{rst}, we prove that the full model and the core+RST should have the same predictions; however, unlike our theory, we observe that the large gap among the accuracy of different groups in the full model is mitigated by the core+RST.
We have not observed a large accuracy boost using unlabeled data in the Toxic-Comment-Detection dataset.
After some error analysis we observed that as mentioned in \citet{garg2019counterfactual}, there are some examples that can be toxic with respect to some identity terms but not the others. 
Furthermore, we observed some biases in annotations in this dataset (e.g., 'username is gay' has labeled as toxic). 
As a result some accuracy drop of the core model is inherent and cannot be mitigated by unlabeled data. 

\paragraph{\textbf{Effect of training data size. }}\reffig{effect_of_size} shows that as we increase the training data size, the gap between the accuracy of the core model and full model is reduced.
In particular, it shows the error on two different groups in the Double-MNIST dataset: the majority (different labels) and minority (same labels).
When we train a model on more training examples and consequently, observing more directions in the training data, we reduce the effect of inductive bias enforced to the model by the spurious feature.
As a result, we observe a decrease in the gap between the full and the core models.

\paragraph{\textbf{Multiple spurious features.}}\reftab{multis} shows the accuracy of two models in the CelebA dataset: A model which uses hair color and wearing earrings as the spurious features and a model that only uses wearing earrings as the spurious feature.
In line with our theory in \refsec{multis}, we observe that:
1) Robust accuracy against wearing-earrings has a bigger gap from standard accuracy when we remove hair color.
2) The accuracy of people who wear lipstick but not and earrings (the group that is more vulnerable to using earrings as a spurious feature and had the lowest accuracy) drops as we remove the hair color attribute.

\section{Related work and discussion}
\label{sec:related_work}


This work is motivated by work in fairness in machine learning that aims to construct a model that is robust against changes in sensitive features. 
The techniques used in this work is similar to work in robust machine learning that tries to understand the tradeoff between the robust accuracy against perturbed inputs and the standard accuracy.
In the following, we discuss related work in these two fields. 
\paragraph{\bf Fairness in Machine Learning.} 
There are mainly two common flavors of fairness notions in machine learning concerning a sensitive feature (e.g., nationality). 
(i) The statistical notions which measure how much a model loss is different among groups according to the sensitive features 
\cite{hardt2016,khani2019mwld,agarwal2018reductions,woodworth2017}.
These notions operate at the group level, and it does not provide any guarantees at the individual level. 
(ii) Counterfactual notion of fairness which measure how much two ``similar'' individuals are incurred different losses because of their sensitive feature \cite{kusner2017,chiappa2019path,loftus2018causal,khani2020noise,kilbertus2017avoiding,garg2019counterfactual}.

This work is related to the counterfactual notion of fairness since we study the models that are robust against sensitive (spurious) features.
Note that there are many concerns and critics regarding counterfactual reasoning when sensitive features are immutable \cite{holland2003causation,freedman2004graphical}.
However, there are works that try to learn a new representation entirely uncorrelated to the sensitive feature in categorical data \cite{zemel2013,madras2018learning,louizos2015variational,mcnamara2017provably}, in vision \cite{creager2019flexibly,wang2019balanced,quadrianto2019discovering} and in natural language processing \cite{sun2019mitigating,bolukbasi2016man,zhao2018gender,zhao2018learning}.
There is a common trend in all of these works: accuracy drops when we train the same model using the new representation.

\citet{khani2020noise} show that when the core features are noisy or incomplete, the model can obtain better accuracy by using sensitive features.
As a result, removing sensitive features in these cases will lead to a drop in accuracy. 
\citet{zhao2019inherent} show that if groups according to the sensitive attribute have different base rates (probability of $y=1$ is different among different groups), it is impossible to learn the optimum classifier from a representation uncorrelated with the sensitive features.
\citet{dutta2020ista} show how biased dataset lead to trade-off in representation learning.
This work shows that under a very favorable condition, still removing spurious features changes the inductive bias of the model, which results in different performance on average and over different groups.
We believe that as overparametrized models such as deep learning models becomes more prevalent, studying inductive biases and their effect on groups gain more importance.  




\paragraph{\bf Robustness in Machine Learning.}
Since the initial demonstration of adversarial examples (examples generated by changes in images that are not perceptible by humans but change the prediction of models) \cite{szegedy2014intriguing,goodfellow2015explaining}, there have been many attempts on achieving a robust model against these perturbations \cite{madry2017towards,goodfellow2015explaining}.
However, making models robust to adversarially input perturbation comes with the cost of a drop in accuracy.
There have been many explanations regarding this drop in accuracy \cite{tsipras2019robustness,tsipras2018there,zhang2019theoretically,fawzi2018analysis,nakkiran2019adversarial}.
The recent line of work on ``double descent'' \citep{bartlett2019benign,gan2017equivalence,nakkiran2019deep,belkin2019two,mei2019generalization} has shed some lights on this trade-off and show that unlike conventional under-parameterized regime in the new over-parameterized regime, more data and fewer parameters might result in a worse error.

The closest work to us is \citet{raghunathan2020understanding}, where they show augmenting the training data with adversarially perturbed inputs changes the inductive bias and consequently can hurt the accuracy.
In contrast to their work, we focused on a specific spurious feature (with a known correlation with other features) instead of an arbitrary data perturbation set for data augmentation.  
As a result, we were enabled to analyze the drop in accuracy in a more interpretable way and show how removing the spurious feature affects different groups.
There is also another difference between our work and work on robustness to input perturbation.
In our work, by removing $s$, we have a robust model independent of the data distribution.
However, robustness to input perturbation always depends on the  distribution.

\section{Conclusion}
In this work, we first showed that overparameterized models are incentivized to use spurious features in order to fit the training data with a smaller norm. Then we demonstrated how removing these spurious features altered the model's inductive bias.
Theoretically and empirically, we showed that this change in inductive bias could hurt the overall accuracy and affect groups disproportionately.
We then proved that robustness against spurious features (or error reduction by removing the spurious features) cannot be guaranteed under any condition of the target and spurious feature.
Consequently,  balanced datasets do not guarantee a robust model and practitioners should consider other features as well. 
Studying the effect of removing noisy spurious features is an interesting future direction. 

%



\paragraph{Reproducibility.} All code, data and experiments for this paper are available on the CodaLab platform at \url{https://worksheets.codalab.org/worksheets/0x6d343ebeabd14571a9549fbf68fd28a4}.

\paragraph{Acknowledgments.} This work was supported by Open Philanthropy Project Award.
We would like to thank Michael Xie, Ananya Kumar, Rishi Bommasani, and the anonymous reviewers for useful feedback.

\bibliography{refdb/all}
\bibliographystyle{plainnat}

\appendix
\onecolumn
\linespread{2}
\renewcommand{\thesection}{\Alph{section}}
\section{Missing proofs}
\label{sec:proofs}
\biasVarianceMsMnos*

\begin{proof}
Recall, the parameters of $\mnos$ can be obtained by solving the following optimization problem.
\begin{align}
\hat \theta^{-s} = \argmin_{\theta} \| \theta\|_2^2\\
s.t. \quad Z\theta =y,
\end{align} 
and the parameters of $\ms$ is obtained by the following optimization problem.
\begin{align}
\hat\theta^{+s}, \ws = \argmin_{\theta, w} \| \theta\|_2^2 + w^2\\
s.t. \quad Z\theta + sw =y,
\end{align}
where $\ws$ denotes the weight associate with the feature $s$.

For the first scenario \refeqn{nos} we have: 
\begin{align}
\theta^{-s} = Z^\top (ZZ^\top)^{-1}y = \Pi \theta^\star 
\end{align}
In the second scenario, when we use feature $s$ we have:
\begin{align}
\label{eqn:w_sol}
\theta^{+s} = Z^\top (ZZ^\top)^{-1} (y-sw) = \Pi \theta^\star -\Pi\beta^\star \ws
\end{align}
Writing the L2-norm of the estimate and taking the derivative with respect with $\ws$, we have:
\begin{align}
\frac{\partial\|\theta^{+s}\| + \partial \ws^2}{\partial \ws} =& \frac{\partial (y-sw)^\top (ZZ^\top)^{-1}Z^\top Z^\top (ZZ^\top)^{-1} (y-sw)}{\partial \ws} + 2w\\
=&2ws^\top (ZZ^\top)^{-1}s - 2y^\top (ZZ^\top)^{-1}s+2w
\end{align} 
therefore, the optimum $\ws$ is:
\begin{align}
\label{eqn:w_comp}
\ws = \frac{y^\top (ZZ^\top)^{-1}s}{1+s^\top (ZZ^\top)^{-1}s} = \frac{{\theta^\star}^\top \Pi \beta^\star}{1+\beta^\star \Pi \beta^\star} 
\end{align}

We can now compute the error for each model:
\begin{align}
\Err (\mnos) =& \E [\tstar^\top z - \E [\theta^{-s}]^\top z ]^2 \\ =& {\theta^\star}^\top(I-\Pi)\Sigma(I-\Pi)\theta^\star \\
\Err (\ms) =& \E \pb {{\theta^\star}^\top z - {\theta^{+s}}^\top z - ws}^2 \\
=&\E \pb {{\theta^\star}^\top z - (\Pi \theta^\star)^\top z + \ws(\Pi \beta^\star)^\top z - \ws{\beta^\star}^\top z}^2\\
=&\E \pb {{\theta^\star}^\top (I-\Pi) z -\ws  \beta^\star(I-\Pi) z}^2\\
 =& {\theta^\star}^\top(I-\Pi)\Sigma(I-\Pi)\theta^\star \\
 &+ \ws^2 {\beta^\star}^\top(I-\Pi)\Sigma(I-\Pi)\beta^\star  -2\ws {\theta^\star}^\top(I-\Pi)\Sigma(I-\Pi)\beta^\star
\end{align}
\begin{align}
	\Err (\ms ) - \Err (\mnos) = \ws^2 {\beta^\star}^\top(I-\Pi)\Sigma(I-\Pi)\beta^\star  -2\ws {\theta^\star}^\top(I-\Pi)\Sigma(I-\Pi)\beta^\star
	\label{eqn:difference}
\end{align}
In order for \refeqn{difference} to be negative $\ws$ should be non zero and $\ws$ and $\tstar^\top (I-\Pi)\Sigma(I-\Pi)\bstar$ should have the same sign. 
Therefore,
$\Err (\ms) < \Err (\mnos)$, iff:
\begin{align}
	\sign (\bstar^\top \Pi \tstar) &= \sign (\tstar^\top (I-\Pi)\Sigma(I-\Pi)\bstar)\\
 \pab {\frac{{\beta^\star}^\top \Pi \theta^\star}{{1+\beta^\star}^\top \Pi \beta^\star}} &< \pab {\frac{2{\beta^\star}^\top (I-\Pi)\Sigma (I-\Pi) \theta^\star}{{\beta^\star}^\top (I-\Pi) \Sigma (I-\Pi) \beta^\star}}
 \end{align}
 
\end{proof}

\needZone*

\begin{proof}
We find three directions ($a_1, a_2, a_3$) in the feature space where the dot product between $\bstar$ and $\tstar$ is non-zero in the first direction, positive in the second direction, and negative in the third direction.
We then construct a training dataset in the first direction, and construct two test datasets in the other two directions. We then use \refprop{whenRemoveSHurts} to prove the corollary.

Let $b \in \R^d$ be a vector that is orthogonal to both $\bstar$ and $\tstar$.
We define $a_2 = (\frac{\tstar}{\|\tstar\|} + \frac{\bstar}{\|\bstar\|} +2b)$ and $a_3 = (\frac{\tstar}{\|\tstar\|} - \frac{\bstar}{\|\bstar\|})$. 
We now find $a_1$ such that it is orthogonal to $a_2$ and $a_3$ (i.e., $a_1 ^\top a_2=0$ and $a_1 ^\top a_3 =0$) but not orthogonal to $\tstar$ and $\bstar$ (i.e., $a_1^\top \bstar \neq 0$ and $a_1^\top \tstar \neq 0$).

\begin{align}
	&a_1^\top (\frac{\tstar}{\|\tstar\|} + \frac{\bstar}{\|\bstar\|} +2b) = 0\\
	&a_1^\top (\frac{\tstar}{\|\tstar\|} - \frac{\bstar}{\|\bstar\|}) =0\\
	&a_1^\top \tstar \neq 0\\
	&a_1^\top \bstar \neq 0
\end{align}
Fixing $b$ and for a $x > 0$ we can
simplifying the above equations and finding a solution by solving:
\begin{align}
&a_1^\top b = -x\\
&a_1^\top \frac{\tstar}{\|\tstar\|} = x\\
&a_1^\top \frac{\bstar}{\|\bstar\|} = x
\end{align}

Since $\tstar,\bstar,b$ are not parallel the above problem has  a solution for any $x$.

We now construct $Z$ to consist of $n$ examples of $a_1$, and construct $Z'$ and $Z''$ to have $n$ examples of $\frac{1}{n}a_2$ and $\frac{1}{n}a_3$ respectively.
Note that instead of copying the same examples to both train and test from we can add examples from directions that are orthogonal to $a_1,a_2,a_3$, and $\tstar, \bstar$.

Note that in this case $\Pi = \frac{a_1a_1^\top}{a_1^\top a_1}$.
As a result $\sign (\tstar \Pi \bstar) = \sign (x^2)$ is positive. 
However, the dot product of \tstar{} and \bstar{} is negative when projected on $a_3$.

\begin{align}
\label{eqn:diff_sign}	
\tstar^\top (I - a_1a_1^\top) (a_3 a_3^\top) (I- a_1a_1^\top) \bstar =&  \tstar^\top (a_3^\top a_3) \bstar\\
=& \tstar^\top (\frac{\tstar}{\|\tstar\|} - \frac{\bstar}{\|\bstar\|}) (\frac{\tstar}{\|\tstar\|} - \frac{\bstar}{\|\bstar\|})^\top \bstar\\
=&(\|\tstar\| - \frac{\tstar^\top \bstar}{\|\bstar\|})(\frac{\tstar^\top \bstar}{\|\bstar\|} - \|\bstar\|)\\
=&-\|\tstar\|\|\bstar\| \left(1 - \frac{\tstar^\top \bstar}{\|\tstar\|\|\bstar\|}\right)^2\\<&0
\end{align}
According to \refeqn{diff_sign} and \refeqn{sign}, $\Err (\ms ) > \Err (\mnos)$ on $Z''$.

We now show that the dot product of $\tstar,\bstar$ projected on $a_2$ is positive.
\begin{align} 
	\label{eqn:same_sign}	
	\tstar^\top (I - a_1a_1^\top) (a_2 a_2^\top) (I- a_1a_1^\top) \bstar =&  \tstar^\top (a_2^\top a_2) \bstar\\
	=& \tstar^\top (\frac{\tstar}{\|\tstar\|} + \frac{\bstar}{\|\bstar\|} +2b) (\frac{\tstar}{\|\tstar\|} + \frac{\bstar}{\|\bstar\|} +2b)^\top \bstar\\
	=&(\|\tstar\| + \frac{\tstar^\top \bstar}{\|\bstar\|})(\|\bstar\| + \frac{\tstar^\top \bstar}{\|\bstar\|})\\
	=&\|\tstar\|\|\bstar\| \left(1 + \frac{\tstar^\top \bstar}{\|\tstar\|\|\bstar\|}\right)^2\\>&0.
\end{align}

In \refeqn{same_sign} we showed that \refeqn{sign} holds for $Z'$, in addition we should also show that \refeqn{magn} also holds.
\begin{align}
	\pab {\frac{{\beta^\star}^\top \Pi \theta^\star}{{1+\beta^\star}^\top \Pi \beta^\star}} &\le  \pab {\frac{2{\beta^\star}^\top (I-\Pi)\Sigma (I-\Pi) \theta^\star}{{\beta^\star}^\top (I-\Pi) \Sigma (I-\Pi) \beta^\star}}\\
		\pab {\frac{{\beta^\star}^\top \frac{a_1a_1^\top}{a_1^\top a_1} \theta^\star}{{1+\beta^\star}^\top \frac{a_1a_1^\top}{a_1^\top a_1} \beta^\star}} &\le  \pab {\frac{2{\beta^\star}^\top a_2a_2^\top \theta^\star}{{\beta^\star}^\top a_2a_2^\top \beta^\star}}\\
	\frac{x^2}{a^\top a+x^2} &\le \frac{2(\|\tstar\| + \frac{\bstar^\top \tstar}{\|\bstar\|})}{\frac{\tstar^\top \bstar}{\|\tstar\|} + \|\bstar\|}\\
		\frac{x^2}{a^\top a+x^2} &\le \frac{2\|\tstar\|}{\|\bstar\|},
\end{align}
since we assumed $d \ge 4$ we can choose a small $x$ and increase the norm of $a$ on other directions to satisfy this inequality. 

Now we show that if $\tstar = c \bstar$ then $\Err (\ms ) \le \Err (\mnos)$. 
First note that if $\ws = 0$ then $\Err (\ms ) = \Err (\mnos)$. 
Furthermore, if $\tstar^\top (I-\Pi) \Sigma (I-\Pi) \bstar =0$ then $\bstar^\top (I-\Pi) \Sigma (I-\Pi) \bstar = 0$ which implies $\Err (\ms ) = \Err (\mnos)$. 
We now assume $\ws \neq 0$, and $\bstar^\top (I-\Pi) \Sigma (I-\Pi) \bstar \neq 0$, and we show that \refeqn{sign} and \refeqn{magn} always hold.

\begin{align}
&\sign (\tstar \Pi \bstar ) = \sign (c \bstar \Pi \bstar) = \sign (c)\\ 	&\sign (\tstar (I-\Pi) \Sigma (I-\Pi) \bstar ) = \sign (c\bstar (I-\Pi) \Sigma (I-\Pi) \bstar ) = \sign (c)
\end{align}
\begin{align}
	 \pab {\frac{{\beta^\star}^\top \Pi \theta^\star}{{1+\beta^\star}^\top \Pi \beta^\star}} &< \pab {\frac{2{\beta^\star}^\top (I-\Pi)\Sigma (I-\Pi) \theta^\star}{{\beta^\star}^\top (I-\Pi) \Sigma (I-\Pi) \beta^\star}}\\
	 \frac{\bstar^\top \Pi \bstar}{1 + \bstar^\top \Pi \bstar}\pab{c} &< 2\pab {c}
\end{align}

\end{proof}

%

\needZtwo*

\begin{proof}
	
	For any vector $u \in \BR^{k\times d}$ let $\bar u \in \BR^{k\times 2}$ denote its first $2$ columns.
	Similarly for $v \in \BR^d$, let $\bar v \in \BR^2$ denote its first $2$ elements.
	
Define $\bar \tstar = [1,1]$ and $\bar \bstar = [1,0]$. 
	We construct $\bar Z \in \BR^{n\times 2}$ such that $Z\tstar= Y$ and $Z\bstar = S$.
	In particular, $\bar Z = [S, Y-S]$. 
	Assign the rest of elements in $Z$ to be zero, $Z=[\bar Z, 0]$. 
	As before, let $\Pi$ be the column space of training data, $\Pi = Z^\dagger Z$.
	Since we assumed $S$ and $Y$ are not parallel, $\Pi$ is a zero matrix with a $2\times 2$ identity matrix and the top left (If all the rows of $Z$ are parallel then it implies that $S$ and $Y-S$ are parallel which means $S$ and $Y$ are parallel).

	Now we find $a,a' \in \BR^{d}$ such that 
(i) $a\tstar =0$ and $a'\tstar =0$, (ii)
$a\bstar =0$ and $a'\bstar =0$, and (iii) conditions on  \refprop{whenRemoveSHurts} holds for $Z+A$ but not for $Z+A'$, where $A, A' \in \R^{n \times d}$ are $n$ copy of $\frac{1}{n}a,\frac{1}{n}a'$ respectively.

The first two conditions ensure that adding $a,a'$ to the rows of $Z$ does not change the targets or spurious features.
Define $Z' = Z + A$ and $Z'' = Z + A'$.

Since we assumed $a\tstar=0$ we can write: 
\begin{align}
		0=a\tstar = a (I-\Pi) \tstar + a \Pi \tstar\implies
		a(I-\Pi) \tstar = - a \Pi \tstar
	\end{align}
	Also note that $(I-\Pi)Z=0$. Now we rewrite the RHS of \refeqn{sign}.
	\begin{align}
		\tstar^\top (I-\Pi) (Z+A)^\top (Z+A) (I-\Pi) \bstar =& 	\tstar^\top (I-\Pi) A^\top A (I-\Pi) \bstar \\ 
		=& 	\tstar^\top (I-\Pi) a^\top a (I-\Pi) \bstar \\ 
		=& \tstar^\top \Pi a^\top a \Pi \bstar
		\\
		=& \bar \tstar \bar a^\top \bar a \bar \bstar,
	\end{align}
where the last inequality holds since $\Pi$ is a zero matrix with a $2\times 2$ identity matrix at the top left corner.

Now define $\bar a = (\frac{\bar \tstar}{\|\bar \tstar\|} + \frac{\bar \bstar}{\|\bstar\|})^\top$ and
	$\bar a' = (\frac{\bar \tstar}{\|\bar \tstar\|} - \frac{\bar \bstar}{\|\bstar\|})^\top$.
Let $\vec{0} \in \R^{d-4}$ denote an all zero vector (when $d=4$ this is empty). 
Define
	$\tstar=[\bar \tstar,\vec{0}, 1, 0]$ and $\bstar =[\bar \bstar,\vec{0}, 0, 1]$.
	We now show that  $a = [\bar a, \vec{0},-\bar a\bar\tstar, -\bar a\bar\bstar]
	$	and $a' = [\bar a', \vec{0}, -\bar a'\bar \tstar, -\bar a'\bar \bstar]
	$
	satisfy all three conditions.
	For the first two conditions we have:
 $a\tstar = [\bar a, \vec{0}, -\bar a \bar \tstar, -\bar a \bar \bstar][\bar \tstar,\vec{0},1,0]^\top = \bar a \tstar - \bar a \bar \tstar = 0$, similarly, $a\bstar = a'\tstar = a'\bstar =0$.
Similar to \refeqn{same_sign} and \refeqn{diff_sign} we can show the \refeqn{sign} holds for $Z'$ but not $Z''$ 
Now we only need to show:
	
	\begin{align}
			\pab {\frac{{\bstar}^\top \Pi \tstar}{{1+\bstar}^\top \Pi \bstar}} &\le  \pab {\frac{2{\bstar}^\top (I-\Pi)\Sigma (I-\Pi) \tstar}{{\bstar}^\top (I-\Pi) \Sigma (I-\Pi) \bstar}}\\
			\pab {\frac{{\bar \bstar}^\top  \bar \tstar}{{1+\bar \bstar}^\top \bar \bstar}} &\le  \pab {\frac{2{\bar \bstar}^\top \bar a \bar a^\top \bar \tstar}{{\bar \bstar}^\top \bar a \bar a^\top \bar \bstar}}\\
		\pab {\frac{{\bar \bstar}^\top  \bar \tstar}{{1+\bar \bstar}^\top \bar \bstar}} &\le  \pab {\frac{2{\bar \bstar}^\top (\frac{\bar \tstar}{\|\bar \tstar\|} + \frac{\bar \bstar}{\|\bstar\|})(\frac{\bar \tstar}{\|\bar \tstar\|} + \frac{\bar \bstar}{\|\bstar\|})^\top \bar \tstar}{{\bar \bstar}^\top (\frac{\bar \tstar}{\|\bar \tstar\|} + \frac{\bar \bstar}{\|\bstar\|})(\frac{\bar \tstar}{\|\bar \tstar\|} + \frac{\bar \bstar}{\|\bstar\|})^\top \bar \bstar}}\\
		\frac{1}{2} &\le \frac{2\|\tstar\|}{\|\bstar\|}=2\sqrt{2}
	\end{align}

We now show if $Y=cS$ then full model always outperforms the core model.
Let $Z\in \R^{n\times d}$ be the training data and $Z' \in \R^{n \times d}$ be the test data.
\begin{align}
	Z\tstar - cZ\bstar = Y - cS = 0 \implies Z(\tstar - c\bstar)= 0 \implies Z^\dagger Z (\tstar - c\bstar) =0 \implies \Pi \tstar = c\Pi \bstar
\end{align}

Let $A=Z' - Z$. Due to our assumption we have $A\tstar = A\bstar = 0$, therefore, 
\begin{align}
	0=A\tstar = A (I-\Pi) \tstar + A \Pi \tstar\implies
	A(I-\Pi) \tstar = - A \Pi \tstar
\end{align} 

We can now show:
	\begin{align}
	\tstar^\top (I-\Pi) (Z+A)^\top (Z+A) (I-\Pi) \bstar =& 	\tstar^\top (I-\Pi) A^\top A (I-\Pi) \bstar \\ 
	=& \tstar^\top \Pi A^\top A \Pi \bstar
\end{align}

We can write conditions on \refprop{whenRemoveSHurts} as follows:
\begin{align}
	\sign (c\bstar^\top \Pi \bstar) = \sign (c \bstar^\top \Pi AA^\top \bstar)
\end{align}
which is always true, we can write the second condition:
\begin{align}
	\pab {\frac{{\bstar}^\top \Pi \tstar}{{1+\bstar}^\top \Pi \bstar}} &\le  \pab {\frac{2{\bstar}^\top (I-\Pi)\Sigma (I-\Pi) \tstar}{{\bstar}^\top (I-\Pi) \Sigma (I-\Pi) \bstar}}\\
	\pab {\frac{c{\bstar}^\top \Pi \bstar}{{1+\bstar}^\top \Pi \bstar}} &\le  \pab {\frac{2c{\bstar}^\top \Pi AA^\top \Pi \bstar}{{\bstar}^\top \Pi AA^\top \Pi \bstar}}\\
\frac{{\bstar}^\top \Pi \bstar}{{1+\bstar}^\top \Pi \bstar} &\le  2\\
\end{align}
Which is again always true.
Therefore, for any choice of $Z$ and $Z'$ full model always have lower or equal error to the core model on $(Z,S,Y)$.

\end{proof}

As a special case, consider the setup where features that determine $s$ and $y$ are disjoint, (i.e., $\bstari\neq 0 \implies \theta_i^\star=0$ and $\theta_i^\star \neq 0 \implies \bstari=0$).
First of all, note that it is still possible that $\tstar$ and $\bstar$ have a large correlation on the observed direction (i.e., projected on $\Pi$), which results in  $\ws \neq 0$.
In this case, if we assume the covariance matrix at the test time is identity, then removing $s$ always helps. 
\begin{restatable}{corollary}{specialCase}
	If true features that determine $s$ and $y$ are disjoint, $\beta_i^\star \neq 0 \implies \theta_i^\star=0$ and $\theta_i^\star \neq 0 \implies \beta_i^\star =0$, and the test time covariance matrix $\Sigma = I$ then removing $s$ always reduce the error.
\end{restatable}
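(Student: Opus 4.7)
The plan is to directly invoke the error-difference identity derived inside the proof of \refprop{whenRemoveSHurts}, namely
\begin{align*}
\Err(\ms) - \Err(\mnos) = \ws^2\, {\bstar}^\top(I-\Pi)\Sigma(I-\Pi)\bstar \;-\; 2\ws\, {\tstar}^\top(I-\Pi)\Sigma(I-\Pi)\bstar,
\end{align*}
together with the closed form $\ws = \tfrac{{\tstar}^\top \Pi \bstar}{1 + {\bstar}^\top \Pi \bstar}$, and simplify under the two hypotheses: (a) $\Sigma = I$, and (b) disjoint supports, which imply ${\tstar}^\top \bstar = 0$ (since for every coordinate $i$ at least one of $\theta^\star_i, \beta^\star_i$ vanishes).

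First I would use that $I - \Pi$ is an orthogonal projection, so when $\Sigma = I$ we have $(I-\Pi)\Sigma(I-\Pi) = I-\Pi$. The quadratic term then simplifies to $\ws^2({\|\bstar\|}^2 - {\bstar}^\top \Pi \bstar)$. For the cross term I would use ${\tstar}^\top \bstar = 0$ to write ${\tstar}^\top(I-\Pi)\bstar = -{\tstar}^\top \Pi \bstar$, and then substitute ${\tstar}^\top \Pi \bstar = \ws(1 + {\bstar}^\top \Pi \bstar)$ from the formula for $\ws$. This turns the cross term into $+2\ws^2 (1 + {\bstar}^\top \Pi \bstar)$.

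Combining the two contributions yields
\begin{align*}
\Err(\ms) - \Err(\mnos) \;=\; \ws^2\bigl({\|\bstar\|}^2 + {\bstar}^\top \Pi \bstar + 2\bigr),
\end{align*}
and since $\Pi$ is positive semidefinite this quantity is manifestly non-negative (strictly positive whenever $\ws \neq 0$). Therefore $\Err(\mnos) \le \Err(\ms)$, establishing the claim.

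There is no real obstacle beyond careful bookkeeping: the disjoint-support assumption is used only as ${\tstar}^\top \bstar = 0$, and the $\Sigma = I$ assumption is used only to kill the covariance factor so that the cross term and the quadratic term can be collapsed via the identity $\tstar^\top \Pi \bstar = \ws(1 + \bstar^\top \Pi \bstar)$. The slightly subtle point worth stating explicitly is that, unlike in \refcor{needZone}, here the disjointness of supports does not force $\ws = 0$; nonetheless, the non-negative structure of the two pieces makes the sign of the difference unambiguous once the covariance is spherical.
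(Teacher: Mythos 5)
Your proof is correct and follows essentially the same route as the paper: both reduce to the error-difference identity from the proof of \refprop{whenRemoveSHurts} and exploit that disjoint supports give ${\tstar}^\top\bstar=0$, hence ${\tstar}^\top(I-\Pi)\bstar=-{\tstar}^\top\Pi\bstar$. The paper concludes by noting the sign condition \refeqn{sign} cannot hold (the two projected correlations are negatives of each other), whereas your direct substitution yields the slightly more informative closed form $\Err(\ms)-\Err(\mnos)=\ws^2(\|\bstar\|^2+{\bstar}^\top\Pi\bstar+2)\ge 0$, with equality iff $\ws=0$.
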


\begin{proof}
	\begin{align}
		\bstar ^\top \tstar = \bstar^\top \Pi \tstar + \bstar^\top (I-\Pi)\tstar=0
	\end{align}
	WLOG, we can assume $\bstar^\top \Pi \tstar > 0$.
	Using \refeqn{when_nos_helps} and the assumption that $\Sigma = I$, we know that removing $s$ increases the error if $\bstar^\top (I-\Pi)\tstar > 0$ which is impossible since we assume $\bstar^\top\tstar =0$. 
\end{proof}

%
%

\robustErrorCompared*
\begin{proof}
 We will show that for each data point $(z,s)$ we can shift $s$ in its perturbation set to $s'$ such that $\ms (z,s') = \mnos (z,s)$, which implies $\robErr (\ms) \ge \Err (\mnos) = \robErr (\mnos)$.
Define $s' = ((I - \Pi) \bstar)^\top z$
 \begin{align}
 	\ms (z,s') =& (\Pi \tstar)^\top z - \ws ((I - \Pi) \bstar)^\top z + ws'\\
 	=& (\Pi \tstar)^\top z - \ws ((I - \Pi) \bstar)^\top z + \ws ((I - \Pi) \bstar)^\top z\\
 	=&(\Pi \tstar)^\top z\\
 	=&\mnos(z,s)
 \end{align}
 Now it suffices to prove $s' \in \sVal$.
 \begin{align}
 |s'| =& |((I - \Pi) \bstar)^\top z|\\
  =& |\bstar^\top (I-\Pi) z|\\
  \le& \gamma \|\bstar\|_*
 \end{align}
 where in the last inequality we used the fact that $(I-\Pi)$ is a projection matrix, and multiplying a vector with a projection matrix results in a vector with a smaller norm. 
 \end{proof}

\multiSFeatures*
\begin{proof}
Similar to \refeqn{w_sol} we have:
\begin{align}
	\ts = \Pi \tstar - \Pi \bstar \ws
\end{align}
Similar to \refeqn{w_comp} we can compute the optimum value for $\ws$ as follows:
\begin{align}
\frac{\partial\|\ts\| + \partial \sum_j w_j^2} {\partial w_i} =& \frac{\partial\left(\sum_j{w_iw_j {\bstari}^\top \Pi \bstarj} - 2w_i{\tstar}^\top \Pi \bstari\right)}{\partial w_i} + 2w_i\\
=&\sum_{j\neq i} 2w_j {\bstari}^\top \Pi \bstarj +2w_i {\bstari}^\top \Pi \bstari   - 2{\tstar}^\top \Pi \bstari+2w_i
\end{align} 
Therefore, the optimum value for $w_i$ is:
\begin{align}
w_i = \frac{\tstar^\top \bstari - \sum\limits_{j\neq i} w_j \bstari^\top \Pi \bstarj}{1+ \bstari^\top \bstari}
\end{align}
\end{proof}

\rst*
\begin{proof}
We show how to derive the esimtated parameters for the core model + RST as introduced in \refsec{rst}.
Recall that we are interested in the following optimization problem 
\begin{align}
	\label{eqn:self_training}
	\tnosST = &\argmin_\theta \|\theta\|_2^2\\
	\text{s.t}\quad &Z\theta = Y\\
	&Z_u \theta = Z_u \ts + S_uw 
\end{align}

Substituting $\ts$, $Y$, and $S_u$ in terms of $\bstar$ and $\tstar$ we have:
\begin{align}
	\tnosST = &\argmin_\theta \|\theta\|_2^2\\
\text{s.t}\quad &Z\theta = Z\tstar \\
&Z_u \theta = Z_u (\Pi \tstar - \ws\Pi \bstar) + Z_u\bstar \ws = Z_u (\Pi \tstar - \ws (I-\Pi)\bstar) \label{eqn:zu} 
\end{align}

as explained in \refeqn{w_comp}, we have: $\ws=\frac{{\bstar}^\top \Pi \tstar}{{1+\bstar}^\top \Pi \bstar}$ .
Since we assumed we have $m > d$ unlabeled examples then solving \refeqn{zu} results in 
\begin{align}
	\tnosST &= \Pi \tstar + \ws(I-\Pi) \bstar
\end{align}
We now prove $\mnosST$ makes the same predictions as $\ms$.
\begin{align}
\mnosST(z,0) = (\Pi \tstar + \ws(I-\Pi) \bstar)^\top z = (\Pi \tstar - \ws\Pi \bstar)^\top z + \ws\bstar^\top z =  (\Pi \tstar - \ws\Pi \bstar)^\top z + ws = \ms (z,s)
\end{align}
\end{proof}

\end{document}